\newtheorem{assumption}{Assumption}
\newcommand{\splice}{\mathrm{W}}
\newcommand{\rF}{\mathrm{F}}
\newcommand{\R}{\mathbb{R}}
\newcommand{\mC}{\mathbf{C}}
\newcommand{\ma}{\mathbf{A}}
\newcommand{\mb}{\mathbf{B}}
\newtheorem{proposition}{Proposition}
\newcommand{\highlight}[1]{\cellcolor{blue!20}{\textbf{#1}}}
\newcommand{\hl}[1]{\highlight{#1}}
\newcommand{\ghl}[1]{\cellcolor{green!20}{\textbf{#1}}}
\title{Quantifying Structure in CLIP Embeddings: A Statistical Framework for Concept Interpretation}
 \author[$\dagger$]{Jitian Zhao}
 \author[$\dagger$]{Chenghui Li}
 \author[$\dagger$]{Frederic~Sala}
 \author[$\dagger$]{Karl Rohe}
 \affil[$\dagger$]{University of Wisconsin-Madison}
 \affil[ ]{\footnotesize{\texttt{\{jzhao326, cli539, fredsala, karl.rohe\}@wisc.edu}}}
\begin{document}

\maketitle

\begin{abstract}
    Concept-based approaches, which aim to identify human-understandable concepts within a model's internal representations, are a promising method for interpreting embeddings from deep neural network models, such as CLIP. While these approaches help explain model behavior, current methods lack statistical rigor, making it challenging to validate identified concepts and compare different techniques. To address this challenge, we introduce a hypothesis testing framework that quantifies rotation-sensitive structures within the CLIP embedding space.
Once such structures are identified, we propose a post-hoc concept decomposition method. Unlike existing approaches, it offers theoretical guarantees that discovered concepts represent robust, reproducible patterns (rather than method-specific artifacts) and outperforms other techniques in terms of reconstruction error. Empirically, we demonstrate that our concept-based decomposition algorithm effectively balances reconstruction accuracy with concept interpretability and helps mitigate spurious cues in data. Applied to a popular spurious correlation dataset, our method yields a 22.6\% increase in worst-group accuracy after removing spurious background concepts.
\end{abstract}

\section{Introduction}
CLIP \citep{radford2021learning} is a powerful tool useful for a wide range of visual applications. Interpreting its high-dimensional embeddings is challenging due to the complex and entangled nature of the learned representations. Recent works address this via \emph{concept-based decomposition}, aiming to identify interpretable semantic patterns within model components, embeddings, and neurons \citep{gandelsman_interpreting_2024, gandelsman_interpreting_2024-1, balasubramanian_decomposing_2024}.

Existing approaches broadly fall into two categories. The first category, exemplified by SpLiCE \citep{bhalla2024interpreting}, decomposes CLIP embeddings into sparse, human-interpretable concepts such as words. While this sparse decomposition improves interpretability, it introduces reconstruction errors, meaning that a substantial portion of the original embedding’s information is lost. This negatively impacts downstream tasks such as zero-shot classification, where preserving semantic information is crucial. 
The second category uses Singular Value Decomposition (SVD) to decompose embeddings into linear combinations of concept vectors \citep{fel_holistic_2023, graziani_concept_2023, Zhang_Madumal_Miller_Ehinger_Rubinstein_2021}. These methods maintain high reconstruction fidelity (i.e., the reconstructed embedding closely approximates the original with minimal error by filtering out noise). %, a consequence of the Eckart-Young theorem \citep{eckart1936approximation}.
However, they often struggle with concept interpretability, as singular vector directions are not inherently aligned with human-interpretable concepts, making their meaning largely dependent on human intuition. This highlights a trade-off: methods that prioritize interpretability often sacrifice reconstruction fidelity, while those that preserve fidelity tend to lack meaningful concept alignment. 

Beyond this seeming interpretability-reconstruction fidelity trade-off, \textbf{\emph{an even deeper issue remains}}: existing methods are ad-hoc rather than the result of a rigorous statistical framework. In real-world settings, embeddings are inherently noisy, and without statistical guarantees, it is unclear whether the concepts extracted by a given method capture meaningful structure or merely reflect artifacts of noise. A key challenge is that noise and meaningful structure can both produce seemingly interpretable components, leading to silent failures where methods extract spurious “concepts” that do not correspond to real semantic attributes.

To address both challenges, we first propose a hypothesis testing framework to detect rotation-sensitive structure in the embedding subspace. Our key insight is that semantic concepts manifest as directional patterns in embedding space that are \emph{sensitive to rotation, unlike random noise}, which remains statistically unchanged under rotation. By testing for rotation-sensitive patterns, our method distinguishes noise from true underlying structure, ensuring that extracted concepts reflect meaningful, stable properties of the data rather than arbitrary artifacts.

\begin{wraptable}{r}{80mm}
\centering
\renewcommand{\arraystretch}{1.2} % Slightly increase row height for readability
\setlength{\tabcolsep}{6pt}       % Reduce column spacing for a more compact table
\begin{tabular}{>{\centering\arraybackslash}p{1.5cm} | >{\centering\arraybackslash}p{1.2cm} >{\centering\arraybackslash}p{1.2cm} >{\centering\arraybackslash}p{1.2cm}}
\toprule
 & \textbf{SVD} & \textbf{SPLICE} & \textbf{Ours} \\
\midrule
\textbf{Sparsity} &  & \checkmark & \checkmark \\
\midrule
\textbf{Fidelity} & \checkmark &  & \checkmark \\
\bottomrule
\end{tabular}
\caption{Sparsity and Fidelity Across Techniques}
\end{wraptable}

Building on the theoretical insights of Varimax rotation \citep{Rohe_Zeng_2023}, we develop a post-hoc method that requires no additional training or human annotation. Our approach \textbf{\emph{achieves both the interpretability benefits of sparse decomposition and the high reconstruction fidelity of SVD-based methods while offering statistical guarantees of concept recoverability}}.

The remainder of this paper is organized as follows:
\begin{itemize}[nosep,leftmargin=*]
    \item In Section \ref{sec:hypothesis testing}, we present a hypothesis testing framework to quantify rotation-sensitive concept structure in the embedding space. We provide the detailed test procedure, theoretical guarantees for test statistics, as well as empirical results.
    \item In Section \ref{sec:identify interpretable concepts}, we introduce a post-hoc concept-based decomposition method accompanied with an automatic concept interpretation algorithm. 
    \item In Section \ref{sec:theory}, we formalize a statistical model that connects concepts with embeddings and prove concept identifiability under certain assumptions. We show that concept decomposition methods with a fixed, misspecified concept vocabulary can suffer from reconstruction errors.
    \item In Section \ref{sec:experiment}, we show through qualitative analysis that our method learns interpretable concepts and maintains high reconstruction fidelity, as evidenced by a sparsity-performance trade-off analysis. We also show that our method is effective in identifying and removing spurious concepts.
\end{itemize}

\section{Hypothesis Test for Rotation Sensitive Concepts}\label{sec:hypothesis testing}
We develop a hypothesis testing framework to detect concepts in embedding spaces through rotational properties. We first characterize meaningful concepts through rotational sensitivity (Sec. \ref{subsec: definition of meaningful concept}), validating our intuition on synthetic and real datasets. We then develop a resampling procedure (Sec. \ref{subsec: resample from null}), test statistics (Sec.  \ref{sec:test_stats}), and test procedure with experimental results in (Sec. \ref{subsec: test procedure}).

\subsection{Characterizing Meaningful Structure via Rotational Sensitivity} \label{subsec: definition of meaningful concept}

We first explain why a rotation-based approach is well-suited for statistically modeling concepts in embeddings. Neural networks process embeddings through inner products with weight vectors, which measure how closely the embedding aligns with each weight vector's direction, making embeddings inherently directional objects. When examining embedding spaces for meaningful structure, we essentially ask whether the distribution of embeddings shows preferences for certain directions over others. A completely structureless embedding space samples points uniformly from all directions, making it \emph{rotationally invariant}. In contrast, meaningful concepts manifest as preferred directions in the embedding distribution, breaking this invariance.

To accurately characterize these rotational preferences, we focus on the rotational properties of singular vectors rather than raw embeddings. This choice is crucial because heterogeneous scaling in different directions (i.e., elliptical patterns in data) can create apparent rotation sensitivity in the raw embeddings even when no meaningful structure exists. Singular vectors, being normalized to unit variance, allow us to identify true directional preferences while controlling for such scaling effects.

\begin{figure}[tbp]
    \centering
    \includegraphics[width=0.7\columnwidth]{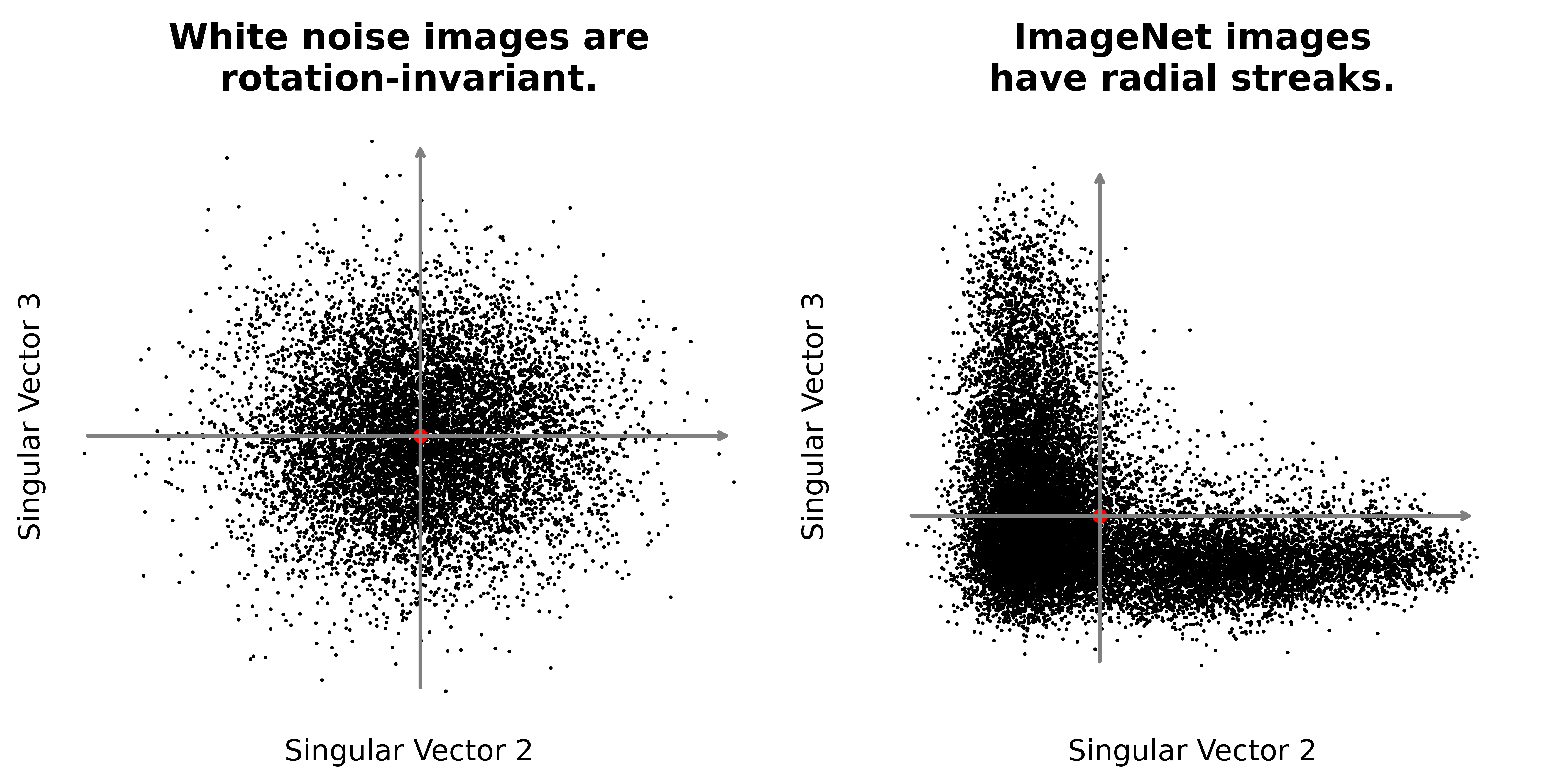}
    \caption{Visualization of singular vector loadings from CLIP embeddings from the ViT-B/32 backbone model, where loadings represent how much each singular vector contributes to an image's representation. Left: Projection onto 2nd and 3rd singular vectors (after Varimax rotation) of embeddings from white noise images, showing rotation-invariant structure. Right: Same projection for ImageNet validation images, revealing distinct radial streak patterns that indicate rotation-sensitive structure. Each point represents one image, with the first singular vector excluded to remove mean effects.}
    \label{fig:rotation_invariance}
\end{figure}

We now formalize these ideas, starting with a formal definition of rotational invariance:

\begin{definition}[Probabilistic Rotational Invariance]
A probability distribution with density function $f$ on $\mathbb{R}^d$ is said to be \emph{rotationally invariant} if for any rotation matrix $R \in \mathbb{R}^{d \times d}$ (i.e., $R^\top R = I_d$ and $\det(R) = 1$), the distribution of $x$ is the same as the distribution of $x R$,
%\begin{equation*}
$    f(x) = f(x R)$ 
%\end{equation*}
for all $x \in \mathbb{R}^d $ and all rotation matrices $R$.
\end{definition}

Intuitively, this definition formalizes when a distribution looks the same in all directions; there are no preferred directions or patterns in the data. To illustrate this definition, we first examine classical examples of rotation invariance, both at the data distribution level and the concept level:

\begin{example}[Standard Multivariate Normal is Rotationally Invariant]\label{ex:gaussian}
The multivariate Gaussian distribution $\mathcal{N}(0, I_d)$ is rotationally invariant.
\end{example}

While Example \ref{ex:gaussian} demonstrates rotation invariance at the data level, we are particularly interested in the rotational properties of concept-specific patterns, which we capture through singular vectors:

\begin{example}[Singular Vectors of Gaussian Noise are Rotationally Invariant]\label{ex:gaussian_sing}
Let $A \in \mathbb{R}^{n\times d}$ be a matrix with entries $A_{ij}$ drawn i.i.d. from $\mathcal{N}(0,1)$. Then the left singular vector matrix $U \in \mathbb{R}^{n\times k}$ and right singular vector matrix $V\in \mathbb{R}^{d\times k}$ are both rotationally invariant. 
\end{example}
This serves as our null model, representing the absence of meaningful concept structure. In contrast, embeddings that encode meaningful concepts exhibit rotation sensitivity. For example, 
\begin{example}[Gaussian Mixture Model Shows Rotation Sensitivity]\label{ex:gmm}
Consider data drawn from a Gaussian mixture model:
$%\begin{equation}
    x \sim \frac{1}{2}\mathcal{N}(\mu, I_d) + \frac{1}{2}\mathcal{N}(-\mu, I_d)$, 
%\end{equation}
where $\mu = (1,0,0,\ldots,0)^\top \in \mathbb{R}^d$. This distribution is not rotationally invariant. 
\end{example}
We show that its concept structure is also rotation-sensitive and defer the details in Example \ref{ex:gmm_singular vector matrix}. 

We can further validate these examples using real CLIP embeddings in Figure \ref{fig:rotation_invariance}. The left panel shows embeddings of white noise images, displaying uniform distribution from all directions (i.e., rotation invariance) similar to Ex. \ref{ex:gaussian_sing}. The right panel shows ImageNet embeddings, revealing clear directional structure through radial streaks, analogous to the structured distribution in Ex. \ref{ex:gmm}.

To formalize these ideas, let $A \in \mathbb{R}^{n\times d}$ be the data matrix. We obtain its truncated singular value decomposition, where $U \in \mathbb{R}^{n\times k}$ contains the first $k$ left singular vectors of $A$. Each column of $U$ represents a principal direction of variation in the data. Our hypothesis test specifically examines the rotational properties of left singular vectors (matrix $U$). 

\subsection{Sampling from the Rotation invariant Distribution} \label{subsec: resample from null}

% Now that we have established rotation sensitivity as the key statistical notion characterizing embedding structure, we can use it to distinguish between embeddings that have structure and those that do not. To do so, we first introduce a resampling technique to generate samples from an embedding space with no directional preference, forming the null distribution for our hypothesis test. With this null distribution in place, we then define appropriate test statistics and outline the corresponding test procedure.

% Our method, presented in Algorithm \ref{alg:generate rotation-invariant matrix}, maps an arbitrary matrix to its rotation-invariant counterpart by applying independent random rotations to each row while preserving geometric properties. By comparing the distribution of observed test statistics against this null distribution, we can compute p-values to assess the statistical significance of rotation-sensitive patterns in the data.

Rotation sensitivity only manifests when an embedding prefers certain directions.  
To mimic the absence of such structure, we generate a null sample by \emph{independently} rotating each row of the embedding while preserving its length. We describe the details of this method in Algorithm \ref{alg:generate rotation-invariant matrix}, which is deferred to the Appendix. Comparing any test statistic to the distribution obtained from these rotated replicas yields a Monte-Carlo \(p\)-value for the presence of rotation-sensitive patterns.

The following proposition establishes the theoretical guarantees for our resampling procedure:

\begin{proposition}[Statistical Properties of Resampling]\label{prop:resampling}
Let $\{x_i\}_{i=1}^n \subset \mathbb{R}^d$ be i.i.d. samples from a probability distribution with density $f$. For any measurable test statistic $T: \mathbb{R}^{d\times n} \to \mathbb{R}$, define:
\begin{align*}
    T_1 = T(x_1,\ldots,x_n), \quad 
    x_i^{\text{rot}} = R^i x_i, \quad R^i \stackrel{iid}{\sim} \text{Uniform}(\text{SO}_d), \quad 
    T^* = T(x_1^{\text{rot}},\ldots,x_n^{\text{rot}}).
\end{align*}
If $f$ is rotationally invariant, then $T^*$ and $T_1$ have the same distribution and are conditionally independent given the set of norms $\{\|x_i\|_2\}_{i=1}^n$.
\end{proposition}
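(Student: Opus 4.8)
The plan is to exploit the key structural fact that $f$ rotationally invariant means each sample $x_i$ can be written as $x_i = \rho_i \, \omega_i$, where $\rho_i = \|x_i\|_2$ and $\omega_i$ is a point on the unit sphere $S^{d-1}$, with $\omega_i$ distributed uniformly on the sphere \emph{conditionally on $\rho_i$} and independent of $\rho_i$. This decomposition is the heart of the argument; once it is established, the conditional independence and equality in distribution both follow by routine manipulations. First I would state and justify this polar decomposition: rotational invariance of $f$ forces the conditional law of $x_i / \|x_i\|_2$ given $\|x_i\|_2$ to be invariant under $\mathrm{SO}_d$, and the unique $\mathrm{SO}_d$-invariant distribution on $S^{d-1}$ is the uniform one. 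By independence across $i$, the entire collection $\{\omega_i\}$ is jointly independent of $\{\rho_i\}$, with each $\omega_i$ i.i.d.\ uniform on the sphere given the norms.

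Next I would analyze the rotated sample. Write $x_i^{\text{rot}} = R^i x_i = \rho_i (R^i \omega_i)$. Conditionally on the norms $\{\rho_i\}$, the vectors $\{\omega_i\}$ are i.i.d.\ uniform on $S^{d-1}$ and the rotations $\{R^i\}$ are independent of everything and i.i.d.\ uniform on $\mathrm{SO}_d$; hence each $R^i \omega_i$ is again uniform on $S^{d-1}$ (left-invariance of Haar measure on $\mathrm{SO}_d$ pushed forward to the sphere), and the collection $\{R^i \omega_i\}$ is i.i.d.\ uniform on the sphere, jointly independent of $\{\rho_i\}$. Therefore, conditionally on $\{\rho_i\}$, the two families $(x_1,\ldots,x_n)$ and $(x_1^{\text{rot}},\ldots,x_n^{\text{rot}})$ are \emph{each} built from the same recipe — multiply the fixed norm $\rho_i$ by an independent uniform direction — and the direction families used are mutually independent (the $\omega_i$ for the first, the $R^i\omega_i$ for the second, and $R^i$ is independent of $\omega_i$). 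This gives: (i) conditional on $\{\rho_i\}$, $T_1$ and $T^*$ have the same conditional law, and (ii) conditional on $\{\rho_i\}$, $T_1$ and $T^*$ are independent, since they are measurable functions of the disjoint, conditionally independent blocks $\{\omega_i\}$ and $\{R^i\omega_i, R^i\}$.

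To finish, I would integrate out the norms: since the conditional law of $T_1$ given $\{\rho_i\}$ equals the conditional law of $T^*$ given $\{\rho_i\}$ as functions of $\{\rho_i\}$, and the norms have the same marginal in both expressions (they are literally the same random variables), the unconditional distributions of $T_1$ and $T^*$ coincide. Conditional independence given $\{\rho_i\}$ is exactly statement (ii). A small care point is measurability — that $T$ is measurable is assumed, and the maps $(\rho,\omega)\mapsto \rho\omega$ and $(R,\omega)\mapsto R\omega$ are continuous, so all composite random variables are genuine random variables; I would note this in passing rather than dwell on it.

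The main obstacle I anticipate is making the independence bookkeeping airtight: one must be careful that $R^i \omega_i$ is independent of $\rho_i$ (true, since $(R^i,\omega_i)$ is independent of $\rho_i$) and that $T^*$, as a function of the $R^i\omega_i$'s, is conditionally independent of $T_1$ given the norms — this requires observing that $\sigma(\{R^i\omega_i\}_i) \subseteq \sigma(\{R^i\}_i,\{\omega_i\}_i)$ and that $\{R^i\}_i \perp \{\omega_i\}_i$, so that conditioning on $\{\rho_i\}$ leaves $\{\omega_i\}$ (hence $T_1$) independent of $\{R^i\}$ and of the resampled directions. The "same distribution" half is comparatively easy once the polar decomposition is in hand; the conditional-independence half is where a sloppy argument could go wrong, so I would spell out the $\sigma$-algebra containments explicitly.
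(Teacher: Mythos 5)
Your overall strategy — the polar decomposition $x_i = \rho_i \omega_i$, arguing that both $\{\omega_i\}$ and $\{R^i\omega_i\}$ are i.i.d.\ uniform directions given the norms, and concluding — is a clean and correct route, and it is in fact more explicit than the paper's argument. The paper instead proves a ``shift on Haar'' statement: writing $x_i^{\text{rot}} = x_i R_i$ with $R_i$ Haar-uniform, it observes that for any fixed $R \in \text{SO}_d$ one has $x_i^{\text{rot}} R = x_i (R_i R)$ with $R_i R$ again Haar-uniform and independent of $x_i$, so the law of the rotated sample is invariant under right multiplication; equality in law with the original sample then rides on the (implicit) observation that a rotation-invariant law is determined by its radial marginal. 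Your version makes the radial/angular factorization explicit and is therefore more self-contained, and it is the only one of the two that even addresses the conditional-independence clause of the statement.

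That said, there is a genuine gap in precisely the step you flag as delicate. You assert that $\sigma(\{R^i\omega_i\}_i) \subseteq \sigma(\{R^i\}_i,\{\omega_i\}_i)$ together with $\{R^i\}_i \perp \{\omega_i\}_i$ implies $\{\omega_i\}_i$ is independent of the resampled directions. This implication is false: $X \perp Y$ does not give $X \perp g(X,Y)$ in general (take $R^i$ deterministic equal to $I$, so $R^i\omega_i = \omega_i$). Worse, the ``block'' $\{R^i\omega_i, R^i\}$ you name is \emph{not} conditionally independent of $\{\omega_i\}$: knowing $(R^i, R^i\omega_i)$ recovers $\omega_i = (R^i)^{-1}(R^i\omega_i)$ exactly. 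The claim survives because $T^*$ depends only on $\{R^i\omega_i\}$ (not on $\{R^i\}$ separately), so what you really need is $\{\omega_i\} \perp \{R^i\omega_i\}$, and this requires the Haar \emph{uniformity} of $R^i$, not merely its independence from $\omega_i$: for each fixed $w$, left-invariance of Haar measure makes $R^i w$ uniform on $S^{d-1}$, so the conditional law of $R^i\omega_i$ given $\omega_i$ is constant in $\omega_i$, whence $R^i\omega_i \perp \omega_i$, and joint independence across $i$ follows from independence of the tuples $(\omega_i, R^i)$. You already invoke this exact Haar-invariance fact one paragraph earlier to get marginal uniformity of $R^i\omega_i$; reuse it here in place of the $\sigma$-algebra containment, which does not carry the weight you put on it.
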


This proposition guarantees that under the null hypothesis of rotational invariance, the resampled test statistic ($T^*$) follows the same distribution as the original statistic ($T_1$). 

\subsection{Test Statistics} \label{sec:test_stats}

To detect and quantify rotation-sensitive structure in embedding spaces, we propose two complementary test statistics that capture different aspects of rotation sensitivity: distributional non-Gaussianity through kurtosis and achievable sparsity under rotation through the Varimax objective function.

\noindent \textbf{Kurtosis-based Statistic.} Our first test statistic measures the non-Gaussian patterns in $U$, as meaningful concepts typically deviate from normal distributions. We define:
\[TS_1(U) = \frac{1}{k} \sum_{i=1}^{k} |\text{kurtosis}(U_{.i})|,\quad \text{where }  \text{kurtosis}(X) = \frac{\mathbb{E}[(X-\mu)^4]}{(\mathbb{E}[(X-\mu)^2])^2} - 3,\]
with $\mu = \mathbb{E}[X]$. Under the rotation invariance null hypothesis, we expect this statistic to be close to zero. 

\begin{theorem}\label{thm:test_stat_null}
Under the null model of Example \ref{ex:gaussian_sing}, an equivalent rescaled version of $TS_1(U)$ follows a standard normal distribution.
\end{theorem}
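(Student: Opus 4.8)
The plan is to reduce the claim to the classical central limit theorem for the sample excess kurtosis of i.i.d.\ Gaussians, leaning on two structural facts: the orthogonal invariance of the Gaussian ensemble and the scale invariance of the kurtosis functional. Throughout, for $v\in\mathbb{R}^n$ I read $\mathrm{kurtosis}(v)$ as the empirical version of the formula in the text, $\widehat\gamma_2(v) := \frac{n^{-1}\sum_j (v_j-\bar v)^4}{\left(n^{-1}\sum_j (v_j-\bar v)^2\right)^2} - 3$, and I take the assertion to be asymptotic in $n$.

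First I would pin down the law of $U$. Since the distribution of $A$ is invariant under $A\mapsto QA$ for every $Q\in\mathrm{O}(n)$, the matrix of top-$k$ left singular vectors is equivariant under this action, so $U$ is Haar-distributed on the Stiefel manifold $\mathcal{V}_{n,k}$ of $n\times k$ matrices with orthonormal columns (the implementation-dependent sign convention is immaterial, since $\mathrm{kurtosis}(v)=\mathrm{kurtosis}(-v)$). In particular each column $u:=U_{\cdot i}$ is marginally uniform on the unit sphere $S^{n-1}$, hence $u\stackrel{d}{=}g/\|g\|_2$ with $g\sim\mathcal{N}(0,I_n)$.

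Next I would use that $v\mapsto\widehat\gamma_2(v)$ is invariant under $v\mapsto cv$ for any $c\neq0$; combined with the previous step, $\mathrm{kurtosis}(U_{\cdot i})\stackrel{d}{=}\widehat\gamma_2(g_1,\dots,g_n)$, the sample excess kurtosis of $n$ i.i.d.\ standard normals — so the problem collapses to textbook asymptotics. Writing $\widehat\gamma_2$ as a smooth function $h$ of the empirical moment vector $M_n=(\overline g,\overline{g^2},\overline{g^3},\overline{g^4})$, the multivariate CLT (finite eighth moment) gives $\sqrt n\,(M_n-(0,1,0,3))\xrightarrow{d}\mathcal{N}(0,\Sigma)$ with $\Sigma$ built from $\mathbb{E}g^2=1,\ \mathbb{E}g^4=3,\ \mathbb{E}g^6=15,\ \mathbb{E}g^8=105$, and the delta method yields $\sqrt n\,\widehat\gamma_2\xrightarrow{d}\mathcal{N}(0,\tau^2)$ with $\tau^2=\nabla h^\top\Sigma\,\nabla h$. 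A short computation with the gradient evaluated at $(0,1,0,3)$ gives $\tau^2=24$, the standard asymptotic variance of the sample kurtosis under normality; the mean-centering terms only perturb the raw-moment expression $\overline{g^4}/(\overline{g^2})^2-3$ by $O_P(n^{-1})$ (because $\overline g,\overline{g^3}=O_P(n^{-1/2})$), negligible at the $\sqrt n$ scale, so one may equivalently run the delta method on the raw moments. Hence $\sqrt{n/24}\cdot\mathrm{kurtosis}(U_{\cdot i})\xrightarrow{d}\mathcal{N}(0,1)$ — the ``equivalent rescaled version'' — and applying this rescaling columnwise inside $TS_1$ gives the stated conclusion.

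With Step 1 in hand the single-column statement is essentially routine, so I expect the real obstacle to be the passage to $TS_1(U)=\frac1k\sum_i|\mathrm{kurtosis}(U_{\cdot i})|$: the columns of a Haar Stiefel matrix are correlated rather than independent, and $x\mapsto|x|$ is non-smooth exactly at $0$, where the per-column statistics concentrate. The cleanest resolution is to phrase the limit per column (the ``rescaled version'' being $\sqrt{n/24}\,\mathrm{kurtosis}(U_{\cdot i})$), for which the argument above is complete; if instead one wants a single limiting statement for the average — say allowing $k\to\infty$ slowly relative to $n$ — one must additionally control the cross-column dependence, e.g.\ via an exchangeability/concentration argument showing $\frac1k\sum_i|\mathrm{kurtosis}(U_{\cdot i})|$ agrees with its mean up to a fluctuation of the right order. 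That aggregation step, not Step 1 or the moment computation, is where I'd expect to spend the effort.
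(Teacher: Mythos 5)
Your proposal follows the same skeleton as the paper's proof---orthogonal invariance of the Gaussian ensemble makes $U$ Haar on the Stiefel manifold, each column is uniform on $S^{n-1}$, and one reduces to Gaussian moment asymptotics---but the route through the per-column variance is genuinely different. You pass through $u\stackrel{d}{=}g/\|g\|_2$ and the scale invariance of the empirical kurtosis to pull back directly to the textbook result $\sqrt{n}\,\widehat\gamma_2\xrightarrow{d}\mathcal{N}(0,24)$ via a delta-method computation. The paper instead computes the exact finite-$n$ sphere moments $\mathbb{E}[u_1^4]=\tfrac{3}{n(n+2)}$, $\mathbb{E}[u_1^8]$, and $\mathbb{E}[u_1^4u_2^4]$ (the last via the clever $(a,b)\mapsto\bigl(\tfrac{a+b}{\sqrt2},\tfrac{a-b}{\sqrt2}\bigr)$ equidistribution trick, combined with $\sum_j u_j^2=1$), yielding a closed-form variance at every $n$. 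Your approach is shorter and leans on classical results; the paper's buys an exact finite-sample variance rather than only the leading order. The two should agree asymptotically, by exactly the scale-invariance argument you give, yet the paper reports leading-order variance $33/n$ while you get $24/n$. Rechecking, the paper's expansion of $\mathbb{E}[X_i^2]$ uses $-n^3(n-1)\bigl(\mathbb{E}[U_{ij}^4]\bigr)^2$ where the correct term, coming from $\bigl(\mathbb{E}[\sum_j u_j^4]\bigr)^2=n^2\bigl(\mathbb{E}[u_1^4]\bigr)^2$, is $-n^4\bigl(\mathbb{E}[U_{ij}^4]\bigr)^2$; with that correction one finds $\mathrm{Var}(X_i)=\tfrac{24(n-1)n^2}{(n+6)(n+4)(n+2)^2}\approx 24/n$. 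So your constant $24$ is the right one, and your method is actually a useful cross-check on the paper's computation.

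Where you should focus remaining effort is exactly where you predicted: the aggregation step. The theorem is a statement about $TS_1(U)=\tfrac1k\sum_i|\mathrm{kurtosis}(U_{\cdot i})|$, yet your argument (like the paper's) only settles the per-column limit. The paper's proof silently rewrites $|\mathrm{kurtosis}(U_{\cdot i})|$ as $n\sum_jU_{ji}^4-\tfrac{3n}{n+2}$, dropping the absolute value, and closes with ``the $X_i^2$ are i.i.d.\ random variables; apply the CLT''---but the columns of a Haar Stiefel matrix are mutually orthogonal, hence not independent, and the absolute value is nonsmooth precisely at the value $0$ around which the per-column excess kurtosis concentrates, so $\mathbb{E}|X_i|\asymp n^{-1/2}$ rather than $0$ and the limit of $|X_i|$ is half-normal rather than normal. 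You flag both issues explicitly, which is more candid than the paper. To actually close the gap one would need to (i) either keep the absolute value and work with the folded/half-normal limit, adjusting the centering and variance accordingly, or reinterpret the statistic without $|\cdot|$; and (ii) replace independence across columns with an exchangeability or asymptotic-decorrelation argument (the off-diagonal covariances of column functionals on the Stiefel manifold vanish at rate $O(1/n)$, which is enough for a CLT over $k$ columns when $k=o(n)$). Neither is in the paper's proof, so the gap you identify is real and shared.
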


This normalization provides an efficient computational path for hypothesis testing under Gaussian assumptions. We defer the detailed proof to the Appendix.

\noindent \textbf{Varimax-based Statistic.} Our second test statistic optimizes over rotations to detect patterns that may be hidden in the original coordinate system. We define:
\begin{align}
v(U,R) {=} \sum_{\ell=1}^k \frac{1}{n}\sum_{i=1}^n  \bigl(|UR_{i\ell}|^4 {-} \bigl(\frac{1}{n} \sum_{q=1}^n |UR_{q\ell}|^2\bigr)^2\bigr), \label{varimax_obj} \qquad 
TS_2(U) {=} \max_{R \in \text{SO}_k} v(U,R), 
\end{align}
where $v$ is the Varimax objective function, and $\text{SO}_k$ is the special orthogonal group of $k \times k$ rotation matrices. This statistic measures the maximum achievable sparsity under rotation, making it particularly sensitive to structured patterns that may be hidden in the original coordinate system.

To apply these test statistics, we use a bootstrap approach. We first generate samples from the null distribution by applying random rotations to the original data matrix. We then compute both test statistics on these null samples to form their empirical distributions. The p-values are calculated by comparing the observed test statistics against these null distributions.

\subsection{Test Procedure and Results} \label{subsec: test procedure}

We present a hypothesis testing procedure for detecting rotation-sensitive structures in embedding spaces. Under the null hypothesis of rotation invariance, we expect large p-values indicating no meaningful structure (e.g., $\geq 0.05$), while significantly small p-values (e.g., $\leq 0.05$) suggest the presence of rotation-sensitive structure. Algorithm \ref{alg:hypothesis-test} outlines our testing approach.

\begin{algorithm}[t]
	\caption{Hypothesis Test for Rotation-Sensitive Concepts} \label{alg:hypothesis-test}
	\begin{algorithmic}[1]
		\Require Matrix $U \in \mathbb{R}^{n \times k}$, resamples $N_{\text{resample}}$
		\Ensure Varimax p-value $p_{\text{v}}$, kurtosis p-value $p_{\text{kur}}$  		
		\For{$i = 1$ to $N_{\text{resample}}$}  
			\State $U_i^{\text{rot}} \gets \text{Rotation Invariant Matrix}(U)$  \Comment{Algorithm \ref{alg:generate rotation-invariant matrix}}
			\State $Z_i^{\text{rot}} \gets U_i^{\text{rot}} \times \argmax{R\in \text{SO}_k}{v(U_i^{\text{rot}},R)}$  \Comment{Apply Varimax rotation in eq: \eqref{varimax_obj}}
			\State Compute statistics $TS_1(Z_i^{\text{rot}})$, $TS_2(Z_i^{\text{rot}})$
		\EndFor
		\State $\hat{Z} \gets U \times \argmax{R\in \text{SO}_k}{v(U,R)}$
		\State Compute $TS_1(\hat{Z})$, $TS_2(\hat{Z})$
		\State Compute p-values:  
		\State $p_{\text{kur}} \gets \frac{\sum \mathbbm{1}[TS_1(\hat{Z}) > TS_1(Z_i^{\text{rot}})]}{N_{\text{resample}}}$  
		\State $p_{\text{v}} \gets \frac{\sum \mathbbm{1}[TS_2(\hat{Z}) > TS_2(Z_i^{\text{rot}})]}{N_{\text{resample}}}$  
		\State \Return $p_{\text{kur}}, p_{\text{v}}$
	\end{algorithmic}
\end{algorithm}

We evaluate our method using CLIP ViT-L/14 embeddings on three datasets: ImageNet validation set images, white noise images (Gaussian noise fed into the CLIP model), and pure white noise embeddings. As shown in Figure \ref{fig:bootstrap}, our test statistics reveal strong evidence of rotation-sensitive structure in CLIP embeddings of ImageNet images, with both test statistics showing significant separation between their bootstrap null distributions and observed values. Control experiments in white noise \emph{confirm our method's validity}. Both white noise embeddings (p-values: 0.55 for Kurtosis, 0.6 for Varimax) and white noise images (p-values: 0.62 for Kurtosis, 0.81 for Varimax) yield non-significant p-values, confirming that random noise contains no meaningful structure.
\section{Identifying Interpretable Concepts}\label{sec:identify interpretable concepts}

We present our main algorithm for identifying rotation-sensitive concept structure in the embedding space, as motivated by the notions in Section \ref{sec:hypothesis testing}. Our approach decomposes the embedding matrix into two components: a sparse loading matrix that captures the relationships between individual images and concepts, and an orthogonal concept dictionary that maintains independence between the discovered concepts. This achieves both interpretability and high reconstruction fidelity.

%We provide a method overview in Section \ref{subsec: decomposition method overview} and then describe the concept interpretation strategies in Section \ref{subsec: concept interpretation}.

\subsection{Method Overview} \label{subsec: decomposition method overview}

\begin{figure}[t]
    \centering
    \includegraphics[width=0.8\linewidth]{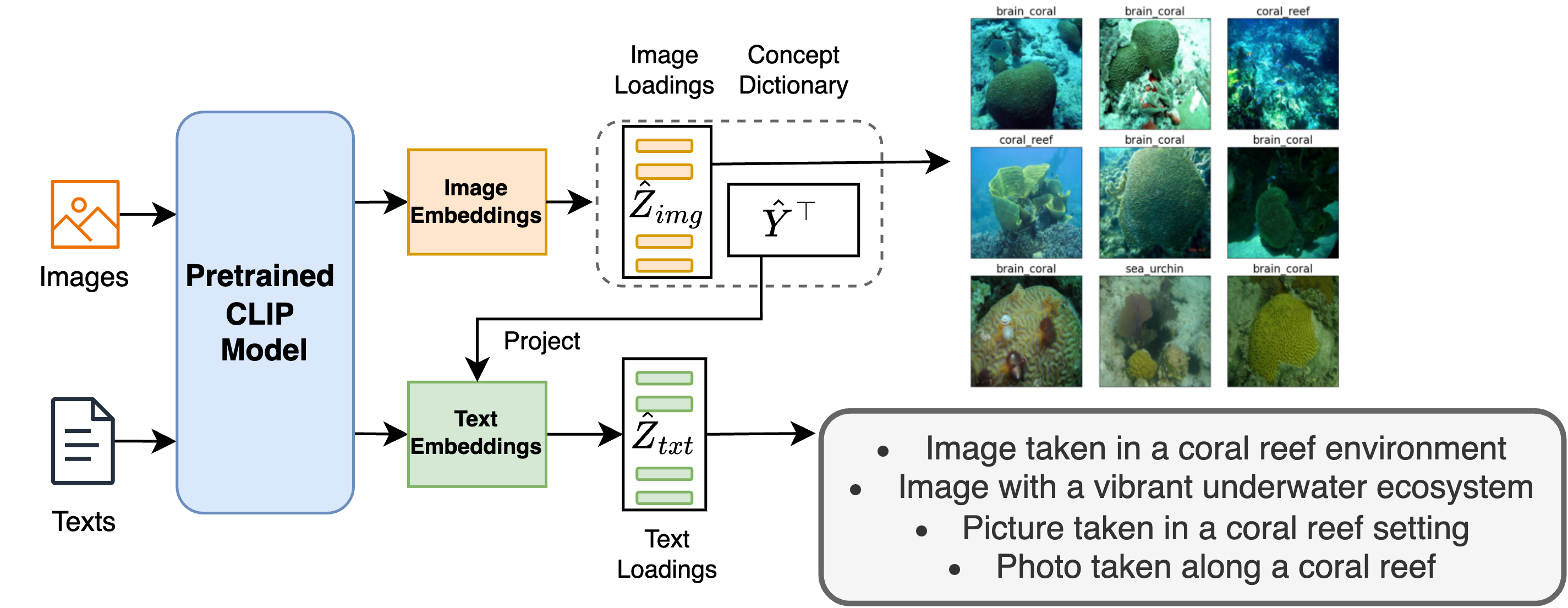}
    \caption{Pipeline overview. CLIP embeddings from images and texts are processed to extract interpretable concepts. Image embeddings are factorized into sparse loadings ($\hat{Z}_{img}$) and a concept dictionary ($\hat{Y}$), while text embeddings are projected to obtain loadings ($\hat{Z}_{txt}$). Bottom: Example of a discovered concept with its representative images and text descriptions.}
    \label{fig: method_overview}
\end{figure}

As shown in Figure~\ref{fig: method_overview}, our pipeline processes image and text inputs through a pretrained CLIP model to obtain embeddings. Given image embeddings $A \in \mathbb{R}^{n \times d}$ and text embeddings $T \in \mathbb{R}^{M \times d}$, we learn $k$ orthogonal concepts represented through three key matrices: concept dictionary matrix $\hat{Y}$, image loadings $\hat{Z}_{img}$, and text loadings $\hat{Z}_{txt}$. 
The image and text loadings indicate how strongly each image or text embedding aligns with the learned concept - higher values represent a stronger association with a particular concept in the dictionary.

Algorithm \ref{alg:concept_decomposition} details our decomposition method, which has three key steps. First, we normalize the embedding matrix for better spectral estimation (see details in Appendix \ref{appsec: experiment_details}). Second, we perform SVD to identify the principal directions in the embedding space. However, these raw SVD components, while capturing the underlying structure, are not automatically aligned with human-interpretable concepts. 
\begin{algorithm}[t]
    \caption{Concept-based Embedding Decomposition} \label{alg:concept_decomposition}
    \begin{algorithmic}[1]
        \State \textbf{Input:} Embedding matrix $A \in \mathbb{R}^{n \times d}$, number of concepts $k$
        \State \textbf{Output:} Concept matrix $\hat{Y}\in \mathbb{R}^{d\times k}$, image loadings $\hat{Z} \in \mathbb{R}^{n\times k}$
        \State $\tilde{A} \gets \text{Normalization}(A)$
        \State $U, D, V^\top \gets \text{SVD}(\tilde{A})$
        \State $R \gets \argmax{R\in \text{SO}_k} {v(UD,R)}$ \Comment{Optimizes objective in Eq.~\eqref{varimax_obj}}
        \State $\hat{Z} \gets UD R$
        \State $\hat{Y} \gets V R$
        \State \Return $\hat{Z}, \hat{Y}$
    \end{algorithmic}
\end{algorithm}
This motivates our third step: Varimax rotation, which maximizes the variance of squared loadings for each concept, naturally pushing individual loadings toward either high values or zero and thus promoting sparsity. This sparsification is crucial for interpretability — it associates each data point primarily with its most relevant concepts and makes concepts more semantically distinct by connecting them only to related examples. The rotation step effectively aligns the rotation-sensitive structure we detected (Section~\ref{sec:hypothesis testing}) with interpretable axes in the embedding space. We empirically validate the necessity of this rotation in Section~\ref{sec:experiment}, where we show that rotated sparse concepts exhibit more explicit semantics compared to raw SVD components.

\subsection{Concept Interpretation}\label{subsec: concept interpretation}

We propose two methods to interpret each concept from the decomposition. The first method utilizes the image loading matrix $\hat{Z}$ to identify representative examples for each concept. For the $j$-th concept, we examine its corresponding column $\hat{Z}_{\cdot j}$ and select the $r$ images with the highest loading scores. These typically share common semantic features, allowing us to derive an interpretable theme for the concept.

Our second method (Alg.~\ref{alg:concept_interpretation}) provides automatic concept interpretation through text descriptions without human intervention. This approach requires a pool of text descriptions for the image dataset, which can be obtained through LLMs or visual-language models (e.g., LLaVA~\citep{liu2023llava}). The algorithm projects these text descriptions onto our learned concept space and identifies the most relevant descriptions for each concept. In our experiments, we use the curated text description set from~\citet{gandelsman_interpreting_2024-1}, which provides general descriptions of ImageNet classes.

\begin{algorithm}[t]
    \caption{Automatic Concept Interpretation} \label{alg:concept_interpretation}
    \begin{algorithmic}[1]
        \State \textbf{Input:} Concept matrix $\hat{Y} \in \mathbb{R}^{d \times k}$, text descriptions $\{M_i\}_{i=1}^M$, text embeddings $T\in \mathbb{R}^{M\times d}$, descriptions per concept $r$
        \State \textbf{Output:} Text descriptions $\{\mathcal{T}_j\}_{j=1}^k$ for each concept
        \State $L \gets T\hat{Y}$ \Comment{Project text embeddings to concept space}
        \For{each concept $j = 1, \ldots, k$}
            \State $\mathcal{I}_j \gets \text{indices of top } r \text{ values in } L_{\cdot j}$
            \State $\mathcal{T}_j \gets \{M_i : i \in \mathcal{I}_j\}$
        \EndFor
        \State \Return $\{\mathcal{T}_j\}_{j=1}^k$
    \end{algorithmic}
\end{algorithm}

\section{Theoretical Results: Identification and Recovery Bounds}\label{sec:theory}

In this section, we establish theoretical guarantees for our concept decomposition method. We first demonstrate that our method can reliably identify meaningful concepts under specific statistical assumptions, thereby extending previous work on Varimax rotation identification. We then analyze fundamental limitations of fixed-concept approaches, demonstrating why adaptive concept learning is necessary.

\subsection{Concept Identification}

Our identification guarantees are based on a key insight: when embedding data exhibits sufficient statistical structure, a unique rotation (up to permutation) exists that aligns with interpretable concepts. We formalize this through the following assumptions:

\begin{assumption}[The identification assumptions for Varimax]\label{assum:identifiability}
The matrix $Z \in \mathbb{R}^{n \times k}$ satisfies the identification assumptions for Varimax if all of the following conditions hold on the rows $Z_i \in \mathbb{R}^k$ for $i = 1, \dots, n$:
%\begin{enumerate}[nosep,leftmargin=*]
    (i) the vectors $Z_1, Z_2, \dots, Z_n$ are i.i.d., (ii) each vector $Z_i$ has $k$ independent random variables (not necessarily identically distributed),
    (iii) the elements of $Z_i$ have kurtosis $\kappa\geq 3$.
\end{assumption}

The independence conditions (i) and (ii) ensure structural consistency across samples, while (iii) requires sufficient non-Gaussianity in the data. We relax the equal variance assumption from \citet{Rohe_Zeng_2023}, allowing different concepts to have different strengths of expression. 
This is crucial. In the vintage sparse-PCA model, the data admit the factorization $X = ZBY'$, where $Z, Y$ are Varimax-rotated eigenvectors, and $B$ is the diagonal matrix of eigenvalues, left and right multiplied by rotation matrices. Because B absorbs all scaling, Z and Y can be rescaled without losing orthogonality.  Our model instead factors the data as $X = ZY'$ for clear interpretation purposes, where $Z$ is the data loading on each concept, and $Y$ is the concept dictionary. Hence, any attempt to transfer scale from Z to Y would break the orthogonality of $Y$, which makes the concept dictionary harder to interpret.

\begin{theorem}[Varimax rotation identification]\label{thm:identifiability}
Suppose that $Z \in \mathbb{R}^{n \times k}$ satisfies Assumption \ref{assum:identifiability}. Define $\tilde{Z} = Z - \mathbb{E}(Z)$. For any rotation matrix $\tilde{R} \in \mathcal{O}(k)$,
\[
\arg \max_{R \in \mathcal{O}(k)} \mathbb{E}\left(v(R, Z \tilde{R}^\top)\right) = \{\tilde{R}P : P \in \mathcal{P}(k)\},
\]
where $\mathcal{P}(k) = \{P \in \mathcal{O}(k): P_{ij} \in \{-1, 0, 1\}\}$, is the full set of matrices that allow for column reordering and sign changes, and $v$ is defined in equation \eqref{varimax_obj}.
\end{theorem}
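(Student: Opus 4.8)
The plan is to reduce the problem to the independent-components structure and then invoke the characterization of Varimax maximizers established by \citet{Rohe_Zeng_2023}, adapted to the unequal-variance setting permitted by Assumption \ref{assum:identifiability}. First I would note that it suffices to handle the case $\tilde R = I$: writing $W = Z\tilde R^\top$, we have $\tilde W = \tilde Z \tilde R^\top$, and for any $R \in \mathcal O(k)$ the value $\mathbb{E}\,v(R, W) = \mathbb{E}\,v(R\tilde R^{-\top} \cdot \tilde R^\top, \tilde Z \cdot \tilde R^\top)$. Since $v(R,\cdot)$ depends on its matrix argument only through the product with $R$ on the right, one checks that $\mathbb{E}\,v(R,W) = \mathbb{E}\,v(R\tilde R^\top, Z)$ — i.e. the objective is left-equivariant under the re-parametrization $R \mapsto R\tilde R^\top$ (note $\tilde R \in \mathcal O(k)$ so $\tilde R^{-\top}=\tilde R$). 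Hence the argmax set for $W$ is exactly $\{\,\tilde R\,S : S \in \arg\max_{R}\mathbb{E}\,v(R,Z)\,\}$, and the theorem follows once we show $\arg\max_R \mathbb{E}\,v(R,Z) = \mathcal P(k)$.

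For the core claim, I would expand the population Varimax objective. Using rows $Z_i$ i.i.d.\ with independent (mean-subtracted) coordinates $\tilde Z_{i\ell}$ having variances $\sigma_\ell^2$ and fourth moments, and noting that the inner empirical averages converge (by the i.i.d.\ assumption, or working directly with the population version $v^*(R,Z) = \sum_\ell \bigl(\mathbb{E}[(\tilde Z R)_{1\ell}^4] - (\mathbb{E}[(\tilde Z R)_{1\ell}^2])^2\bigr)$), the objective becomes a sum over output columns $\ell$ of the excess fourth moment of $(\tilde Z R)_{1\ell} = \sum_m R_{m\ell}\tilde Z_{1m}$. Expanding the fourth moment of this independent sum and subtracting the square of its (rotation-invariant) second moment, the cross terms involving distinct indices cancel against the subtracted term, leaving $\sum_\ell \sum_m R_{m\ell}^4 \,\bigl(\mathbb{E}[\tilde Z_{1m}^4] - 3\sigma_m^4\bigr) = \sum_m \gamma_m \sum_\ell R_{m\ell}^4$, where $\gamma_m = \sigma_m^4(\kappa_m - 3) \ge 0$ by condition (iii) with at least the nonnegativity guaranteed. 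So $\mathbb{E}\,v(R,Z)$ (up to additive constants independent of $R$) equals $\sum_m \gamma_m \,\|R_{m\cdot}\|_4^4$ where $R_{m\cdot}$ is the $m$-th row of $R$.

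Now I would optimize $\sum_m \gamma_m \sum_\ell R_{m\ell}^4$ over $R \in \mathcal O(k)$. Since $R$ is orthogonal, each row satisfies $\sum_\ell R_{m\ell}^2 = 1$, so $\sum_\ell R_{m\ell}^4 \le 1$ with equality iff row $m$ is a signed standard basis vector. Because all $\gamma_m \ge 0$ and, generically under (iii), all $\gamma_m > 0$, the sum is maximized only when every row of $R$ is a signed basis vector, which together with orthogonality forces $R \in \mathcal P(k)$; conversely every $P \in \mathcal P(k)$ attains the common maximal value $\sum_m \gamma_m$. (If some $\gamma_m = 0$ exactly, one must state the identification holds up to the obvious indeterminacy in those degenerate directions, or simply assume strict $\kappa > 3$ as the non-degenerate case; I would flag this and keep the hypothesis $\kappa \ge 3$ with the convention that the argmax set is as stated when the relevant moments are non-degenerate.) Combining with the first paragraph's reduction gives $\arg\max_{R\in\mathcal O(k)}\mathbb{E}\,v(R, Z\tilde R^\top) = \{\tilde R P : P \in \mathcal P(k)\}$, as claimed.

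The main obstacle I anticipate is the bookkeeping in the fourth-moment expansion — carefully verifying that all mixed terms cancel against the $(\mathbb{E}[(\tilde ZR)_{1\ell}^2])^2$ subtraction using independence and mean-zero of the coordinates — together with handling the unequal-variance generalization cleanly: one must confirm that the weights $\gamma_m$ come out as $\sigma_m^4(\kappa_m-3)$ and that nonnegativity (not equality) of these weights still pins down $\mathcal P(k)$ as the maximizer set, which is exactly where condition (iii)'s $\kappa \ge 3$ is used and where the subtle degenerate-$\gamma_m$ caveat lives. The orthogonal-matrix optimization itself (the inequality $\sum_\ell R_{m\ell}^4 \le (\sum_\ell R_{m\ell}^2)^2 = 1$) is routine.
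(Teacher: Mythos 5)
There is a genuine gap in the moment expansion. You claim that ``the cross terms involving distinct indices cancel against the subtracted term,'' leaving $\sum_\ell\sum_m R_{m\ell}^4(\mathbb{E}[\tilde Z_{1m}^4]-3\sigma_m^4)$. This is incorrect, and the error invalidates the core reduction. For an independent mean-zero sum $\sum_j R_{j\ell}\tilde Z_{1j}$, the fourth moment is
\[
\sum_j R_{j\ell}^4\,\eta_j \;+\; 3\sum_{h\ne h'}\sigma_h^2\sigma_{h'}^2\,R_{h\ell}^2 R_{h'\ell}^2,
\]
whereas the squared second moment is $\bigl(\sum_j R_{j\ell}^2\sigma_j^2\bigr)^2 = \sum_j R_{j\ell}^4\sigma_j^4 + \sum_{h\ne h'}\sigma_h^2\sigma_{h'}^2 R_{h\ell}^2 R_{h'\ell}^2$. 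Subtracting leaves $\sum_j R_{j\ell}^4(\eta_j-3\sigma_j^4) + 2\bigl(\sum_j R_{j\ell}^2\sigma_j^2\bigr)^2$ — the cross terms do \emph{not} cancel; there is a factor of $3$ in the fourth cumulant but only $1$ in the subtracted square, leaving a residual $2\bigl(\sum_j R_{j\ell}^2\sigma_j^2\bigr)^2$. Moreover, your parenthetical ``(rotation-invariant) second moment'' is false precisely in the setting this theorem is about: when $\sigma_j^2$ are unequal (the relaxation the paper emphasizes over \citet{Rohe_Zeng_2023}), $\sum_j R_{j\ell}^2\sigma_j^2$ varies with $R$, so the subtracted square is not a constant. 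Your simplification to $\sum_m\gamma_m\|R_{m\cdot}\|_4^4$ therefore holds only in the equal-variance case, which is exactly the case this theorem does \emph{not} assume.

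The paper's proof confronts this head-on: after computing $\mathbb{E}[v]$ (including the $O(1/n)$ terms from taking expectation of the empirical average rather than the population functional), it obtains precisely a sum of a non-negatively weighted quartic term plus a non-trivial $(\sum_j\sigma_j^2 O_{jl}^2)^2$ term, and must bound \emph{both} — the quartic via $\sum_\ell O_{jl}^4\le 1$ and the second via Cauchy–Schwarz — showing each is maximized at signed permutations. To repair your argument you would need to add the analysis of the residual term, e.g. via the majorization/Schur-convexity observation that $\sum_\ell\bigl(\sum_j O_{jl}^2\sigma_j^2\bigr)^2 \le \sum_j\sigma_j^4$ for doubly stochastic $(O_{jl}^2)$, with equality at permutations. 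Your degeneracy caveat for $\kappa_m=3$ is a fair observation (the paper is also somewhat cavalier here), but it is secondary to the missing term.
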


Under our assumptions, this shows the Varimax objective identifies the correct concept rotation up to permutation.

\subsection{Reconstruction Error Bounds for Fixed-Concept Methods}

When the concept matrix is fixed, reconstruction errors arise from potential misalignment between predefined concepts and the ground-truth concept structure. To formalize this limitation, we denote the ground-truth latent concept matrix as $\mC^*\in\mathbb{R}^{d\times k}$ and the fixed concept matrix (such as in SpLiCE) as $\mC_{\splice}\in\R^{d\times m}$ where $k\leq d<m$. We assume $\mC_{\splice}$ may fail to capture some information present in $\mC^*$, which we quantify through the following condition:
%\begin{equation}\label{eq:projection error}
  $  \min_{P\in\R^{m\times k}}\|\mC_{\splice}P-\mC^*\|_{\rF}\ge \delta$, 
%\end{equation}
where $P$ is an arbitrary projection matrix, $\delta>0$ represents the minimum possible misalignment between the fixed and true concepts, and $\|\cdot\|_{\rF}$ represents the Frobenius norm.

\begin{theorem}[Fixed concept-decomposition method reconstruction error lower bound]\label{thm:splice fails}
    Given the misspecification condition above,  %\eqref{eq:projection error}, 
    consider $A\in\R^{n\times d}$ such that $A=Z^*\mC^{*\top}$ with positive $k$-th singular value, i.e. $\sigma_{k}(Z^*)>0$, then we have
 %  \begin{equation}
  $
        \min_{Z\in\R^{n\times m}}\|A-Z\mC_\splice^\top\|_{\rF}\ge \sigma_{k}(Z^*)\delta$, 
%    \end{equation}
    where $\sigma_k(Z)=\sqrt{\sigma_k(Z^\top Z)}$ is the absolute $k$-th largest singular value of $Z$.
\end{theorem}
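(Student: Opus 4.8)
The plan is to reduce the claim to the misspecification hypothesis $\min_{P\in\R^{m\times k}}\|\mC_\splice P-\mC^*\|_{\rF}\ge\delta$ by ``cancelling'' $Z^*$ from the left. Since $\sigma_k(Z^*)>0$ and $Z^*\in\R^{n\times k}$, the matrix $Z^*$ has full column rank $k$, so its Moore--Penrose pseudoinverse $(Z^*)^{+}\in\R^{k\times n}$ satisfies $(Z^*)^{+}Z^*=I_k$. Left-multiplying the residual $A-Z\mC_\splice^\top$ by $(Z^*)^{+}$ then collapses the first term to exactly $\mC^{*\top}$ while turning the second term into something of the form $(\mC_\splice P)^\top$ for a suitable $P$, after which the hypothesis applies.

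Concretely, I would first compute $(Z^*)^{+}A=(Z^*)^{+}Z^*\mC^{*\top}=\mC^{*\top}$. Next, for an arbitrary $Z\in\R^{n\times m}$, set $P:=\bigl((Z^*)^{+}Z\bigr)^\top\in\R^{m\times k}$, so that $(Z^*)^{+}Z\mC_\splice^\top=P^\top\mC_\splice^\top=(\mC_\splice P)^\top$. Subtracting gives the identity
\[
(Z^*)^{+}\bigl(A-Z\mC_\splice^\top\bigr)=\mC^{*\top}-(\mC_\splice P)^\top=(\mC^*-\mC_\splice P)^\top .
\]
Taking Frobenius norms (which are transpose-invariant) and invoking the misspecification condition with this particular $P$ yields $\bigl\|(Z^*)^{+}(A-Z\mC_\splice^\top)\bigr\|_{\rF}=\|\mC^*-\mC_\splice P\|_{\rF}\ge\delta$.

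Finally I would remove the pseudoinverse via submultiplicativity of the spectral and Frobenius norms: $\bigl\|(Z^*)^{+}M\bigr\|_{\rF}\le\|(Z^*)^{+}\|_{\mathrm{op}}\,\|M\|_{\rF}$ for any $M$, together with $\|(Z^*)^{+}\|_{\mathrm{op}}=1/\sigma_k(Z^*)$, since the nonzero singular values of $(Z^*)^{+}$ are the reciprocals of those of $Z^*$ and, by full column rank, $\sigma_k(Z^*)$ is the smallest of the latter. Combining, $\delta\le\|(Z^*)^{+}(A-Z\mC_\splice^\top)\|_{\rF}\le\sigma_k(Z^*)^{-1}\|A-Z\mC_\splice^\top\|_{\rF}$, i.e.\ $\|A-Z\mC_\splice^\top\|_{\rF}\ge\sigma_k(Z^*)\delta$; since $Z$ was arbitrary, the bound passes to the infimum over $Z\in\R^{n\times m}$, which is the claim.

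The argument is short and there is no single hard step; the points needing a little care are (a) justifying $(Z^*)^{+}Z^*=I_k$, which is exactly where $\sigma_k(Z^*)>0$ is used, (b) checking the $P$ built from $Z$ really lies in $\R^{m\times k}$ so the misspecification bound applies verbatim, and (c) the identity $\|(Z^*)^{+}\|_{\mathrm{op}}=1/\sigma_k(Z^*)$. If one prefers to avoid pseudoinverses, an equivalent route is to take a thin SVD $Z^*=U_*\Sigma_*V_*^\top$ with invertible $\Sigma_*\in\R^{k\times k}$ and left-multiply by $V_*\Sigma_*^{-1}U_*^\top$; the bookkeeping is identical.
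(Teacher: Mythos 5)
Your proof is correct, and it takes a route that is genuinely though subtly different from the paper's. The paper first rewrites the misspecification condition via the orthogonal projection onto the column space of $\mC_\splice$, then explicitly identifies the minimizer of $\|Z^*\mC^{*\top}-Z\mC_\splice^\top\|_{\rF}$ over $Z$ as a projection, and finally applies a Frobenius-norm-versus-singular-value inequality (Lemma~\ref{lem:Frobenius norm inequality}, essentially $\|Z^*B\|_{\rF}\ge\sigma_k(Z^*)\|B\|_{\rF}$ via transposition) to peel off $Z^*$. You instead avoid computing any argmin at all: for an \emph{arbitrary} $Z$, left-multiplying the residual by $(Z^*)^{+}$ collapses it to $(\mC^*-\mC_\splice P)^\top$ with $P=((Z^*)^{+}Z)^\top\in\R^{m\times k}$, so the misspecification bound applies directly; the pseudoinverse is then undone by the elementary facts that $\|(Z^*)^{+}M\|_{\rF}\le\|(Z^*)^{+}\|_{\mathrm{op}}\|M\|_{\rF}$ and $\|(Z^*)^{+}\|_{\mathrm{op}}=1/\sigma_k(Z^*)$. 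Both arguments are short and ultimately lean on the same spectral fact (the smallest nonzero singular value of $Z^*$ governs how much left-multiplication by $Z^*$ can shrink a Frobenius norm), but your version sidesteps the projection formulas and the auxiliary lemma, using only textbook properties of the Moore--Penrose inverse, while the paper's version has the side benefit of exhibiting the optimal $Z$ explicitly. Your dimension bookkeeping in (a)--(c) is correct, and the full-column-rank hypothesis $\sigma_k(Z^*)>0$ is used precisely where you say it is.
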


This theorem quantifies the risk of fixing the concept decomposition matrix in SpLiCE: when the predefined concept vocabulary cannot be aligned with the true concepts, reconstruction error is unavoidable. Our proposed method avoids this limitation by learning concepts from the data.

\section{Experiment Results}\label{sec:experiment}
We evaluate our method through qualitative and quantitative analyses. We assess the interpretability of learned concepts via visualizations and textual alignment, analyze the trade-off between sparsity and reconstruction fidelity, %showing our method offers better reconstruction accuracy than baseline model. In Section \ref{sec:removing spurious correlations}, 
and demonstrate the effectiveness of our method in removing spurious correlations across multiple datasets. %Additionally, Appendix~\ref{appsec: experiment_details} presents evidence that our learned concepts capture compositional properties, enabling analogical reasoning similar to word embeddings.

\subsection{Qualitative Evaluation of Discovered Concepts}\label{sec:qualitative evaluation}
We evaluate the effectiveness of our concept decomposition method visually and textually .

\noindent \textbf{Setup.} We apply our method to CLIP ViT-B/32 embeddings of the ImageNet validation set, using the curated text description set from \citet{gandelsman_interpreting_2024} that provides class-specific descriptions generated via ChatGPT.

\noindent\textbf{Results.} Figure \ref{fig:svd_vs_rot} compares concept clusters discovered by our Varimax‐rotated decomposition (left column) against those from raw SVD (right column). For two representative concepts, we display the top nine images by loading score ($\hat Z_{.j}$) alongside their automatically retrieved text descriptions from Algorithm \ref{alg:concept_interpretation}. The top row shows a concept manually selected to demonstrate the effectiveness of our method, while the bottom row concept shows a randomly selected concept from a pool of 50. \textbf{\textit{Our method consistently yields semantically coherent clusters (e.g.\ butterfly feeding scenes, screws) with concise, focused descriptions.}} In contrast, raw SVD clusters mixed themes such as furniture and animals, screws and knitwears, accompanied with broader, less specific descriptions. These differences demonstrate that Varimax rotation effectively isolates meaningful concept directions in the CLIP embedding space, resulting in far more structured and interpretable concept representations than standard SVD decomposition.

\begin{figure}
    \centering
    \includegraphics[width=\textwidth]{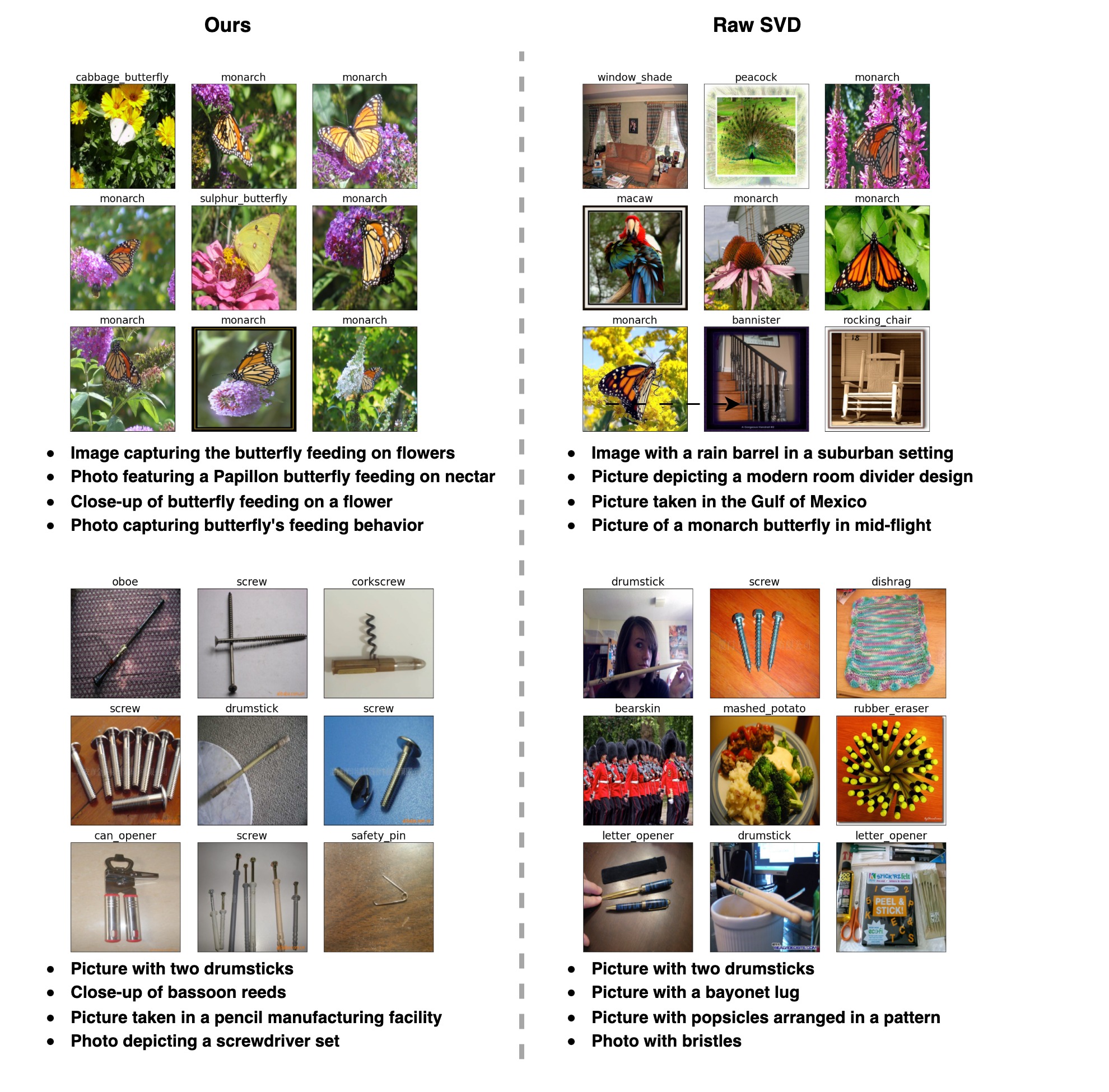}
    \caption{Comparison of concept clusters obtained by our Varimax‐rotated decomposition (left column) and raw SVD (right column) on CLIP image embeddings. Each cell shows the top nine images for a given concept, annotated with their retrieved text descriptions. Our method produces tight, semantically coherent clusters and precise labels, whereas raw SVD yields mixed‐semantics groups and more generic descriptions.}
    \label{fig:svd_vs_rot}
\end{figure}

\subsection{Sparsity-performance Trade-off}\label{subsec: sparsity-tradeoff}

We analyze how the number of concepts in our decomposition affects reconstruction fidelity, measured by the cosine similarity between original and reconstructed embeddings.

\noindent \textbf{Setup.} We evaluate our method on the ImageNet validation set using two CLIP models: ViT-B/32 (512 dimensions) and ViT-L/14 (768 dimensions). We compare our results against those of SpLiCE \citep{bhalla2024interpreting} as a baseline.

\noindent \textbf{Results.} Figure \ref{fig:sparsity_tradeoff} shows that reconstruction fidelity improves with increasing number of concepts $k$ for both models. ViT-L/14 consistently shows lower cosine similarity compared to ViT-B/32 at equal $k$, reflecting the challenge of capturing its richer 768-dimensional embedding space with the same concept budget. Our method \textbf{\textit{achieves substantially higher reconstruction fidelity}} compared to SpLiCE when using comparable numbers of concepts.

The quality of reconstruction is crucial as it indicates how well our decomposition preserves the semantic information encoded in the original embeddings. While concept decomposition inherently involves a trade-off between interpretability and information preservation, our method offers flexible control through the number of concepts $k$, allowing users to balance these competing objectives. Unlike SpLiCE, which prioritizes interpretability at the cost of significant information loss, \textbf{\emph{our approach maintains interpretable concepts while better preserving the original embedding structure}}. This preservation of semantic information is essential for downstream applications and validates that our discovered concepts capture meaningful aspects of the data.

\begin{table}[t]

\centering
\resizebox{0.7\columnwidth}{!}{%
\begin{tabular}{@{}lccccccccc@{}}
\toprule
\multicolumn{1}{c}{\textbf{Model}} & \multicolumn{3}{c}{\textbf{Waterbirds}} & \multicolumn{3}{c}{\textbf{iWildCam}} & \multicolumn{3}{c}{\textbf{CelebA}} \\ 
\cmidrule(lr){2-4} \cmidrule(lr){5-7} \cmidrule(lr){8-10}
& \textbf{Avg} & \textbf{WG$^\dagger$} & \textbf{Gap$^\ddagger$} 
& \textbf{Acc} & \textbf{mF1} & \textbf{MRec} 
& \textbf{Avg} & \textbf{WG$^\dagger$} & \textbf{Gap$^\ddagger$} \\ 
\midrule
ZS               & 84.8  & 38.1  & 46.7  & 6.23  & 0.001   & 0.002   & 81.2   & 74.2   & 7.0  \\
SVD-recon.       & 85.5  & 39.0  & 46.5  & 3.83  & 0.001   & 0.001   & 78.1   & 74.9   & \ghl{3.2}   \\
Spurious Removed & \textbf{89.6}  & \hl{60.7}  & \ghl{28.9}  & \textbf{18.8} & \textbf{0.003}  & \textbf{0.006}  & \textbf{82.6}   & \hl{75.1}   & 7.5   \\
\bottomrule
\end{tabular}
}
\caption{Performance comparison across datasets. WG$^\dagger$: worst-group accuracy (higher better), Gap$^\ddagger$: accuracy gap (lower better), mF1: micro-F1, MRec: macro-recall. Best results in \textbf{bold}. Purple and green highlights indicate the best worst-group accuracy and the smallest accuracy gap.}
\label{tab:results_comparison}
\end{table}

\begin{figure}
    \centering
    \includegraphics[width=0.5\linewidth]{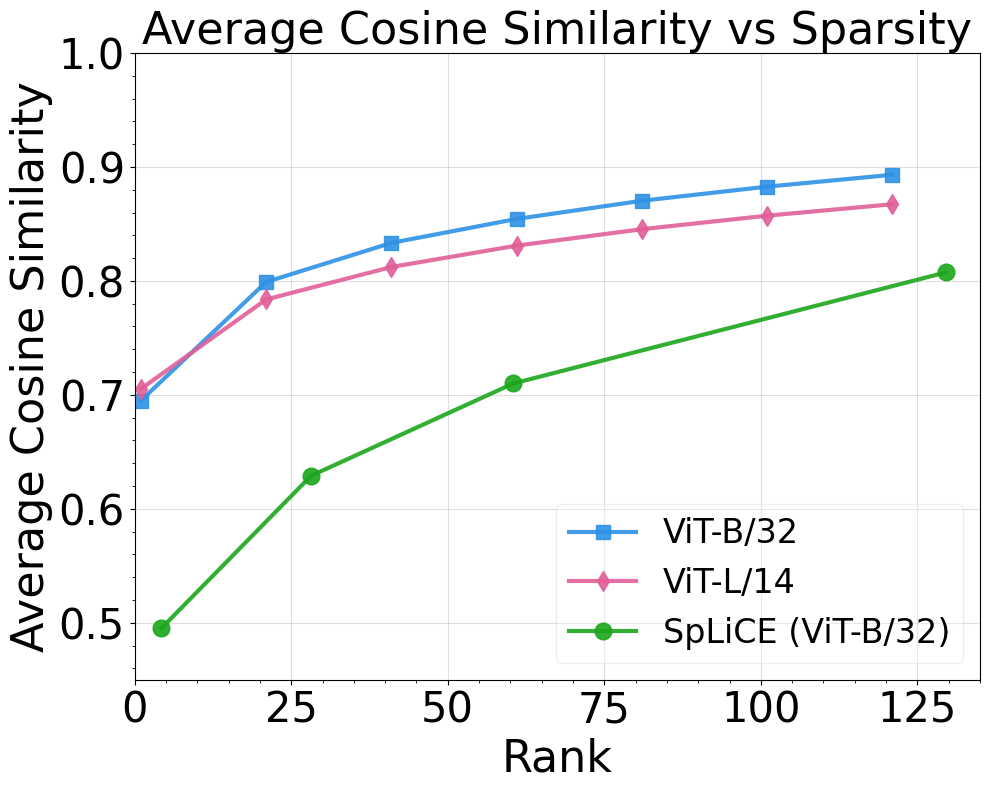}
    \caption{Reconstruction quality versus number of concepts. Higher cosine similarity indicates better preservation of the original embedding structure. Our method offers a better interpretability-fidelity trade-off than SpLiCE.}
    \label{fig:sparsity_tradeoff}
\end{figure}

\subsection{Removing Spurious Correlations}\label{sec:removing spurious correlations}

We evaluate our method's ability to identify and remove spurious correlations across three datasets: Waterbirds \citep{sagawa2019distributionally}, WILDS-iWildCam \citep{beery2020iwildcam}, and CelebA \citep{liu2015faceattributes}. Dataset details are provided in Appendix~\ref{appsec: experiment_details}.

\noindent \textbf{Setup.} For all experiments, we use CLIP ViT-B/32 embeddings (512 dimensions) and decompose them into $k=50$ concepts using Algorithm \ref{alg:concept_decomposition}. Spurious concepts are identified using dataset-specific strategies (detailed in Appendix~\ref{appsec: experiment_details}), which analyze the correlation between concept embeddings and text descriptions emphasizing different attributes (e.g., target vs. background features). 
We defer the details of how to remove spurious concepts to the appendix.

We compare three embeddings: the full, original CLIP image embedding; the concept-based reconstruction, where embeddings are reconstructed from all learned concepts; and the spurious-removed reconstruction (Ours), where embeddings are reconstructed after removing spurious concepts. For classification, we follow the standard zero-shot classification setup.

\noindent \textbf{Results.}
Our method \textbf{\textit{consistently improves zero-shot prediction performance after removing spurious concepts}} (Table~\ref{tab:results_comparison}). 
On Waterbirds, removing spurious background concepts improves worst-group accuracy by 22.6\% and reduces the accuracy gap by 17.8\%. On iWildCam, the prediction accuracy triples from 6.23\% to 18.8\%; and on CelebA, we achieve the highest average and worst-group accuracy while using only 5\% of the original embedding dimensions. The SVD-reconstructed embeddings maintain similar average accuracy to the original embeddings for the Waterbirds dataset, suggesting our method preserves task-relevant information while removing noise.
\section{Conclusion}\label{sec:clip_conclusion}
We introduced a hypothesis testing framework to quantify rotation-sensitive structures in the embedding spaces and proposed a concept-decomposition method that achieves both high reconstruction fidelity and clear interpretability. We validated it through theoretical and empirical analyses. 
%Theoretically, we proved properties of rotation-invariant structures under Gaussian noise setting and established that our method ensures concept identifiability. We also showed that existing fixed-dictionary approaches suffer from unavoidable errors when the vocabulary is misspecified.
%Empirically, we validated the effectiveness of our hypothesis test framework through experiments on both synthetic and real datasets. We also demonstrated that our concept-decomposition method recovers interpretable concepts while maintaining high reconstruction accuracy. 
Applied to challenging distribution shift benchmarks, our method consistently demonstrated significant improvements after identifying and removing spurious concepts.
\paragraph{Limitations.} Our approach assumes linearity in the embedding decomposition, which may overlook complex non-linear structures potentially present in the embedding space. Additionally, we note that the interpretability of discovered concepts depends partially on the quality and scope of the available text descriptions, which may introduce biases or limit generalization. Finally, while our hypothesis testing procedure is robust to rotationally invariant noise, it does not explicitly handle structured, non-rotational noise patterns, leaving room for further refinement in more nuanced settings.
\section{Acknowledgments}\label{sec:acknowledge}
We thank Kris Sankaran, Keith Levin, Yinqiu He, Changho Shin and Dyah Adila for the valuable discussions and feedback. Jitian Zhao and Chenghui Li gratefully acknowledge support from the IFDS at UW-Madison and NSF through TRIPODS grant 2023239 for their support.

\bibliographystyle{plainnat}
\bibliography{ref}

\newpage
\appendix
Supplementary materials contain additional experiment details, results, and proofs. We provide glossary table in Section \ref{app:glossary}, extended discussion on related work in Section \ref{app:related work}, additional experiment details in Section \ref{appsec: experiment_details} and results in Section \ref{app:additional exp res}, technical lemmas and additional theoretical results in Section \ref{supp: technical lemmas}, \ref{supp: additional theoretical results}, and proof in Section \ref{sec:proofs}.

\section{Glossary}\label{app:glossary}
The glossary is given in Table \ref{tab:glossary} below.

\begin{table}[h]
\begin{tabular}{ll}
\hline
Symbol & Definition \\
\hline
$U$ & Matrix containing singular vectors from SVD decomposition \\
$Z$ & Image loading matrix, represents how images load onto concepts \\
$\text{SO}_k$ & Special orthogonal group (rotation matrices) in $\mathbb{R}^{k \times k}$ \\
$A$ & Input embedding matrix of size $\mathbb{R}^{n \times d}$ \\
$Y$ & Estimated concept matrix of size $\mathbb{R}^{d \times k}$ \\
$k$ & Number of concepts \\
$n$ & Number of data points/images \\
$d$ & Dimension of embeddings \\
$R$ & Rotation matrix \\
$T$ & Text embedding matrix \\
$C_j$ & The $j$-th concept \\
$\sigma_d(Z)$ & The absolute $d$-th largest singular value of $Z$ \\
$\mC^*$ & Ground-truth latent concept matrix \\
$\mC_W$ & Word concept matrix \\
$v(U,R)$ & Varimax objective function \\
$T S_1(U)$ & Kurtosis test statistic \\
$T S_2(U)$ & Varimax objective function test statistic \\
$T S_3(U)$ & Rescaled kurtosis test statistic \\
$\|\cdot\|_F$ & Frobenius norm \\
$\text{kurtosis}(U_{.i})$ & Kurtosis of the $i$-th column of $U$ \\
$\mathcal{P}(k)$ & Set of permutation matrices in $\mathbb{R}^{k \times k}$\\
\hline
\end{tabular}
\caption{Glossary of Notation}
\label{tab:glossary}
\end{table}

\section{Extended Related Work}\label{app:related work}

\noindent \textbf{Hypothesis Test for Rotation-Sensitive Structure}
Early work on multivariate inference already framed ``no preferred direction'' as a null hypothesis and developed tests for departures from \emph{spherical symmetry}. Classical procedures such as Mauchly’s test for sphericity \citep{mauchly1940significance}, John’s test for identity covariance \citep{john1971some}, and the Bingham–Watson family of uniformity tests on the hypersphere treat rotational invariance as the baseline state of a distribution; significant rejections therefore signal the presence of directional (i.e., rotation-sensitive) structure. In high-dimensional settings, modern random-matrix tests-e.g. the Ledoit–Wolf \citep{ledoit2002some} and Chen–Lei–Mao \citep{chen2010tests} statistics for detecting covariance spikes—extend the same principle by comparing observed eigen-spectra with isotropic nulls. Our approach inherits these ideas but adapts them to neural embeddings, where raw coordinates can be scaled or rotated arbitrarily. We deploy a rotation-invariant bootstrap that gauges whether an embedding layer deviates from isotropy before we perform Varimax for interpretability.

\noindent \textbf{Rotation-sensitive Structure and Factor Interpretability}
Classical factor analysis has long recognized that raw factors are \emph{rotation–indeterminate}: any orthogonal transform of the loading matrix yields the same likelihood, so interpretability hinges on choosing a ``simple-structure'' rotation such as Varimax \citep{kaiser1958varimax}. Recent statistical results show this is not just cosmetic — Varimax can be viewed as a consistent spectral estimator that recovers the true sparse structure under mild conditions \citep{vsp_jrssb}. Independent-component analysis resolves the same indeterminacy by exploiting non-Gaussianity to make the rotation identifiable, demonstrating that probabilistic assumptions can anchor the axes in a semantically meaningful way \citep{hyvarinen2023nonlinear}. In modern representation learning, the \emph{same symmetry} re-appears: embeddings trained with contrastive or language-model objectives are identifiable only up to an unknown linear map, implying that any axis-aligned interpretation is fragile unless one fixes the rotation with additional bias \citep{roeder2021linear}. Empirically, post-hoc rotations have been shown to sharpen semantics — e.g. \citet{park2017rotated} rotate word-embedding bases to make individual dimensions correspond to human concepts without hurting downstream accuracy. Our work unifies these threads: we provide a hypothesis test that \emph{detects} rotation-sensitive structure in CLIP embeddings, then apply a statistically‐grounded Varimax rotation to expose sparse, concept-aligned axes, thus reconciling fidelity with interpretability within a single framework.

\section{Additional Experiment Details}\label{appsec: experiment_details}

All our experiment results are carried out using frozen pretrained weights from open-clip (ViT-B/32 and ViT-L/14), and no additional model training is involved. 

\subsection{Concept Decomposition Algorithm Details}
\noindent \textbf{Scaling Data Matrix}
We scaled the data matrix $A \in \mathbb{R}^{n \times d}$ before applying SVD in the concept decomposition algorithm. Define the row normalization vector as:
\begin{equation*}
    deg_r = A\mathbf{1}_d \in \mathbb{R}^n, \quad \tau_r = \frac{1}{n}\mathbf{1}_n^T deg_r \in \mathbb{R}, \quad D_r = \text{diag}(deg_r + \tau_r\mathbf{1}_n) \in \mathbb{R}^{n \times n}
\end{equation*}
Similarly, define the column quantities $deg_c = \mathbf{1}_n^T A \in \mathbb{R}^d$, $\tau_c = \frac{1}{d}deg_c\mathbf{1}_d \in \mathbb{R}$, and $D_c = \text{diag}(deg_c + \tau_c\mathbf{1}_d) \in \mathbb{R}^{d \times d}$. The scaled data matrix is then defined as $\tilde{A} = D_r^{-1/2}AD_c^{-1/2}$, which we use as input to the concept decomposition algorithm instead of the original matrix $A$.
This scaling step with regularization parameters $\tau_r$ and $\tau_c$ helps stabilize the spectral estimation and prevents potential outliers in the singular vectors that could arise from noise in the data matrix \citep{le2017concentration,zhang2018understanding}.

\subsection{Hypothesis Test Experiment Details}

To validate our hypothesis testing framework, we conducted experiments on both a real dataset—the ImageNet validation set—and two synthetic datasets: white-noise image embeddings and pure white-noise embeddings. The goal was to assess the framework’s ability to detect non-random structures in different types of data.

\noindent \textbf{Datasets.}

\begin{itemize}
\item \textbf{ImageNet Validation Set:} We used embeddings computed by a pretrained Vision Transformer (ViT-B/32) model on images from the ImageNet validation set.
\item \textbf{White-Noise Image Embeddings:} We generated 10,000 white-noise images, each of size $224 \times 224$ pixels with 3 color channels. Each pixel value was drawn independently from a standard Gaussian distribution. These images were then processed by the pretrained ViT model to obtain embeddings of size $10,000 \times 512$.
\item \textbf{Pure White-Noise Embeddings:} We directly generated a random noise matrix of dimensions $10,000 \times 512$, with each entry sampled from a standard Gaussian distribution, without passing through the embedding model.
\end{itemize}

\noindent \textbf{Experiment Details.}
For each dataset, we obtained an embedding matrix $\tilde{A}$ as described above. We then performed Singular Value Decomposition (SVD) on $\tilde{A}$, decomposing it into $\tilde{A} = UDV^\top$. Here, $U$ is a matrix whose columns are the left singular vectors, representing orthogonal directions in the embedding space, and whose rows correspond to the images.

We observed that the first column of $U$ (the first principal component) often captured mean or bias effects in the embeddings. The loadings on this component were concentrated around a constant value, offering limited information about the latent structure of the data. Therefore, we excluded the first column of $U$, defining $\tilde{U} = U[:, 2:]$, to focus on more informative components.

Next, we applied our hypothesis testing framework to $\tilde{U}$ to compute p-values and test statistics, assessing the statistical significance of any non-random patterns present in the data.

\noindent \textbf{Randomness in the Procedure}

Our hypothesis testing procedure involves randomness in two key aspects:

\begin{enumerate}
\item \textbf{Row-wise Random Rotations:} To generate conditionally rotation-invariant data, we applied random rotations to each row of $\tilde{U}$. This step introduces randomness into the data transformation process.
\item \textbf{Generation of Synthetic Data:} The white-noise image and pure white-noise embeddings were generated using random sampling from standard Gaussian distributions.
\end{enumerate}

To account for the variability introduced by these random processes, we performed additional tests using 5 different random seeds and varied the rank $k$ of $\tilde{U}$.

\noindent \textbf{Selection of Rank $k$}
We investigated how the choice of rank $k$, the number of singular vectors retained in $\tilde{U}$, affects the results of the hypothesis tests. We expect the p-values to increase with larger $k$, indicating a decreased ability to detect rotation-sensitive structure. As $k$ increases, more columns of $U$ are included, potentially introducing additional noise and reducing the statistical power to detect non-Gaussian signals. We report our results in Figure \ref{fig:hypothesis_test_comparison}. We observed that for white noise image embeddings, p-values increase as k increases, which aligns with our expectation. For white noise embedding, we observed p-values oscillate around 0.5 and show no clear pattern as $k$ changes. This aligns with our theoretical results from Example \ref{ex:gmm_singular vector matrix}, which suggests $U$ follows a rotationally invariant distribution, and the p-value should follow an approximately uniform distribution between 0 and 1.

\begin{figure}[ht]
    \centering
    % First subfigure
    \begin{subfigure}[b]{0.45\linewidth}
        \centering
        \includegraphics[width=\linewidth]{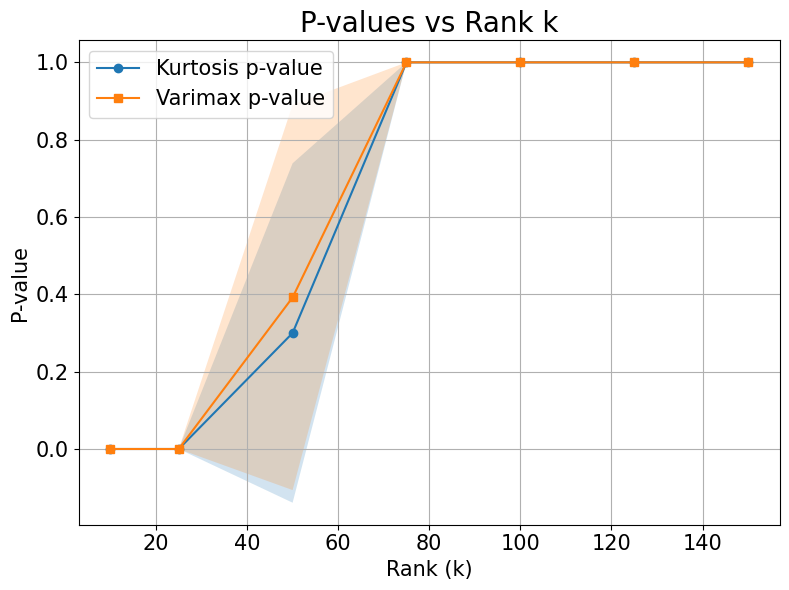}
        % \caption{Hypothesis test results on 10,000 white noise image embedding matrix produced by ViT-L/14 model.}
        % \label{fig:wn_img_p_values}
    \end{subfigure}
    \hfill
    % Second subfigure
    \begin{subfigure}[b]{0.45\linewidth}
        \centering
        \includegraphics[width=\linewidth]{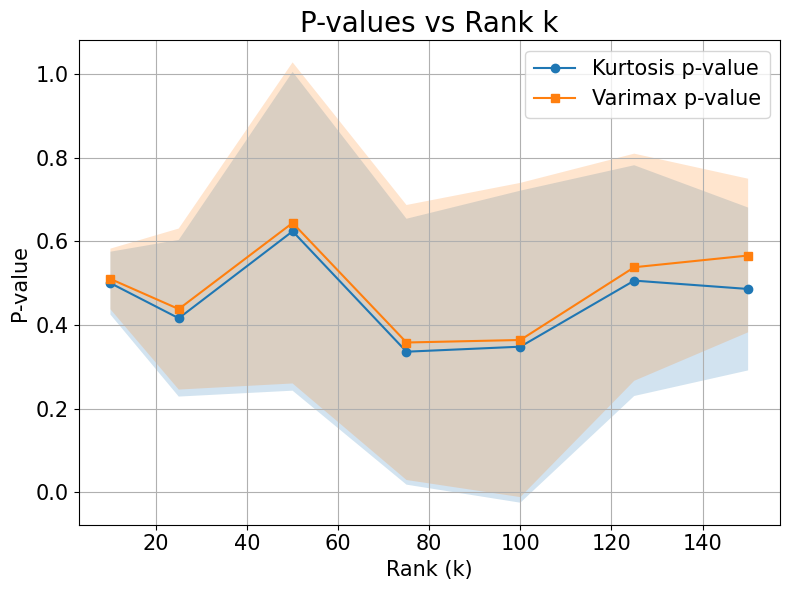}
        % \caption{Hypothesis test results on white noise embedding of dimension $10,000\times 768$.}
        % \label{fig:wn_p_vals}
    \end{subfigure}
    \caption{Illustration of how p-values change with rank $k$. Left: white noise image embedding from a pretrained ViT-L/14 model. Right: white noise embedding of dimension $10,000 \times 768$.}
    \label{fig:hypothesis_test_comparison}
\end{figure}

\subsection{Spurious Concept Removal Experiment}
\noindent \textbf{Datasets details}
\begin{itemize}
    \item \textbf{Waterbirds} \citep{sagawa2019distributionally}: Bird species (waterbird/landbird) with spurious background correlations (water/land)
    \item \textbf{WILDS-iWildCam} \citep{beery2020iwildcam,koh2021wildsbenchmarkinthewilddistribution}: Animal species classification with spurious location-specific features across camera traps
    \item \textbf{CelebA} \citep{liu2015faceattributes}: Hair color classification (blonde/non-blonde) with spurious attribute correlation (eyeglasses)
\end{itemize}

Dataset statistics can be found in table \ref{tab:dataset_stats}.

\begin{table}
\centering
\begin{tabular}{@{}lcccc@{}}
\toprule
\textbf{Dataset} & \textbf{Task} & \textbf{Spurious Attribute} & \textbf{\#Images} & \textbf{\#Classes} \\
\midrule
Waterbirds & Bird Species & Background & 4,795 & 2 \\
iWildCam & Animal Species & Location & 42,791 & 182 \\
CelebA & Hair Color & Eyeglasses & 19,962 & 2 \\
\bottomrule
\end{tabular}
\caption{Dataset characteristics and their corresponding spurious correlations.}
\label{tab:dataset_stats}
\end{table}

\noindent \textbf{Spurious Concept Detection Methods.}
We develop strategies to automatically identify spurious concepts, tailored to each dataset's characteristics:

\begin{itemize}
    \item For \textbf{Waterbirds}, we generate text descriptions following the template "A \{bird\_type\} with a \{background\_type\} background." We identify spurious concepts as those where top-ranking descriptions share common backgrounds but varied bird species.
    
    \item For \textbf{iWildCam}, we employ a contrastive approach using two sets of descriptions: one focusing on animal features and another on location attributes. We compute cosine similarities between concept embeddings and these description embeddings to identify concepts that correlate strongly with location features.
    
    \item For \textbf{CelebA}, we generate descriptions emphasizing either hair color or eyeglasses attributes, using a similar contrastive approach to separate target concepts from spurious ones.
\end{itemize}

\paragraph{Removing Spurious Concepts} To remove spurious concepts, we reconstruct image embeddings while setting the coefficients of identified spurious concepts to zero. Given the decomposition $A_{i.}=\sum_j \alpha_j C_j$ where $C_j$ are learned concept vectors, we enforce $\alpha_{spurious} = 0$ to filter out spurious information.

\noindent \textbf{Waterbirds Experiment}
For the Waterbirds experiment, we use ['a landbird', 'a waterbird'] as class prompts. In the zero-shot prediction experiment, we first compute text embeddings for class prompts and compute cosine similarity between class prompt embeddings and image embeddings. Then, for each image, we extract the class with the highest similarity as the prediction. 

For removing spurious concepts, we first decompose image embedding into a linear combination of concepts with Algorithm \ref{alg:concept_decomposition}: $A_{i.}=\sum_j \alpha_j C_j$. Suppose we have identified spurious concepts with our proposed method, as explained in the main content. By removing spurious concepts, we set the coefficients for spurious concepts to 0. In other words, $\alpha_{spurious}=0$.

\subsection{Algorithm to Generate Rotation-Invariant Matrix}
\begin{algorithm}
	\caption{Generate Rotation-Invariant Matrix} \label{alg:generate rotation-invariant matrix}
	\begin{algorithmic}[1]
		\State \textbf{Input:} Matrix $U \in \mathbb{R}^{n \times k}$
		\State \textbf{Output:} Rotation-invariant matrix $U^{\text{rot}} \in \mathbb{R}^{n \times k}$
		\For{each row $u_i \in \mathbb{R}^k$, $i = 1, 2, \dots, n$}
			\State Generate random rotation matrix $R_i \in \mathbb{R}^{k \times k}$
			\State $u_i^{\text{rot}} \gets R_i u_i$  \Comment{Rotate the row $u_i$}
		\EndFor
		\State $U^{\text{rot}} \gets \left[ u_1^{\text{rot}}, u_2^{\text{rot}}, \dots, u_n^{\text{rot}} \right]^T$ \Comment{Matrix of rotated rows}
		\State \Return $U^{\text{rot}}$
	\end{algorithmic}
\end{algorithm}

\section{Additional Experiment Results}\label{app:additional exp res}
\subsection{Bootstrap simulation results}
In Figure \ref{fig:bootstrap}, we present bootstrap kurtosis and observed kurtosis distribution defined in Section \ref{sec:test_stats}.
\begin{figure}[t]
\centering
\begin{subfigure}[b]{0.48\columnwidth}
    \centering
    \includegraphics[width=\columnwidth]{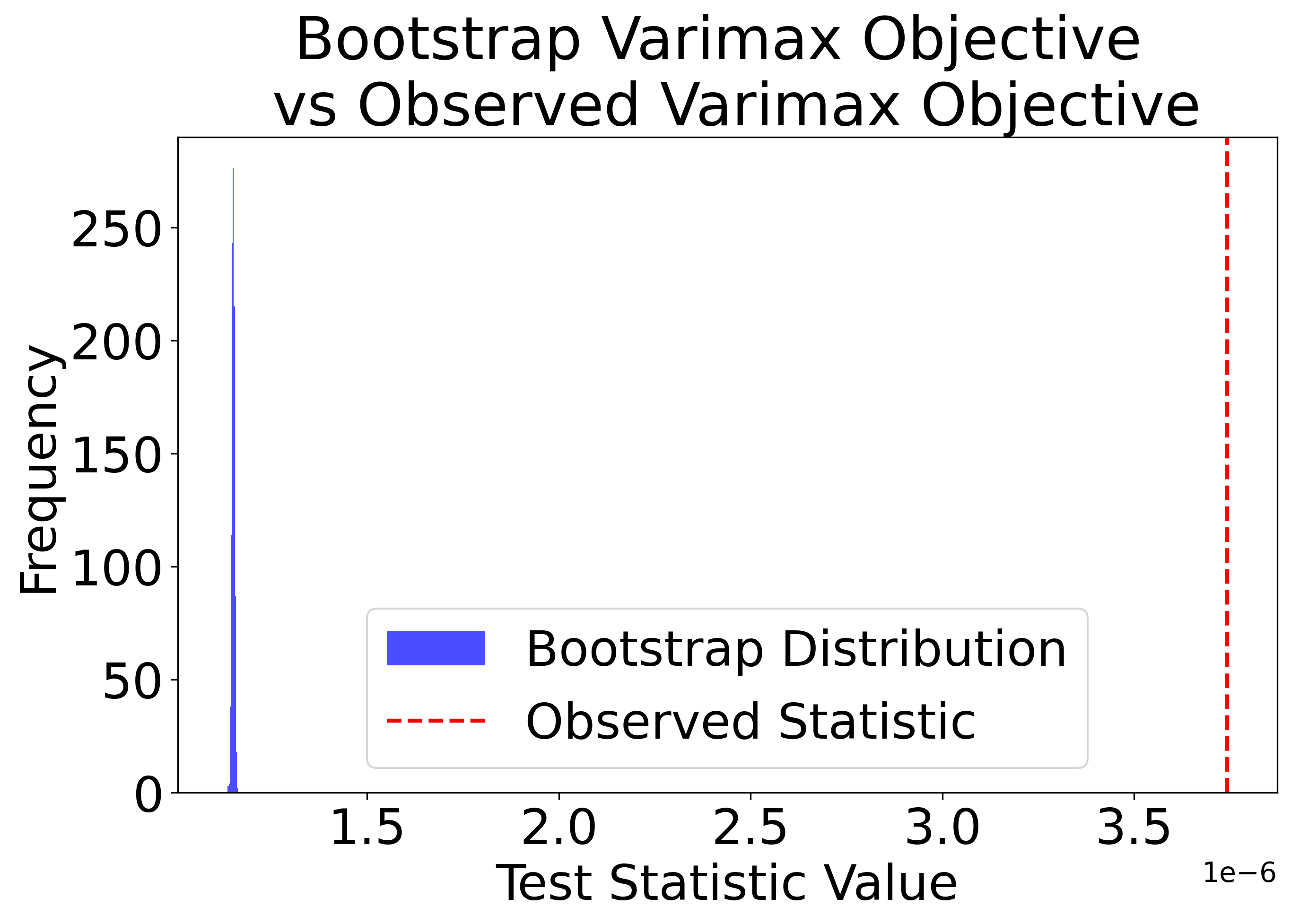} 
    \caption{Bootstrap Varimax Objective vs Observed }
    \label{fig:varimax}
\end{subfigure}
\hfill
\begin{subfigure}[b]{0.48\columnwidth}
    \centering
    \includegraphics[width=\columnwidth]{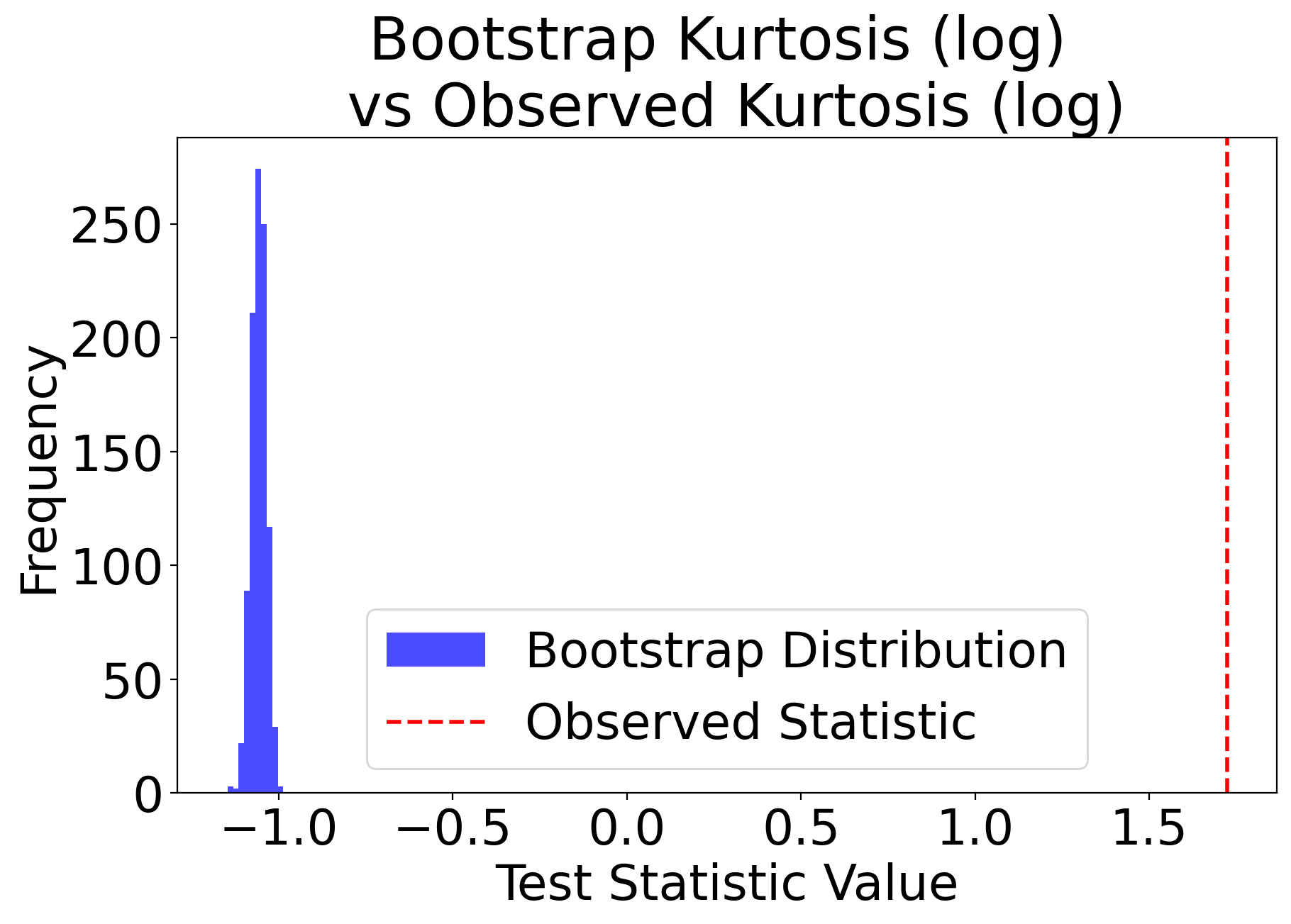} 
    \caption{Bootstrap Kurtosis vs Observed Kurtosis}
    \label{fig:kurtosis}
\end{subfigure}
\caption{Comparison of bootstrap distributions and observed test statistics. The blue histograms show the distribution of test statistics computed from rotation-invariant resamples under the null hypothesis. The red dashed lines indicate the observed test statistics computed from CLIP embeddings of the ImageNet validation set.}
\label{fig:bootstrap}
\end{figure}

\subsection{Additional Concept Results for ImageNet}\label{supp: imagenet concept results}

We provide additional concept results for the ImageNet validation set in Figure \ref{fig:more varimax concepts}. Embeddings are computed by the ViT-B-32 model. 
\begin{figure}[htbp]
    \centering
    \includegraphics[width=\linewidth]{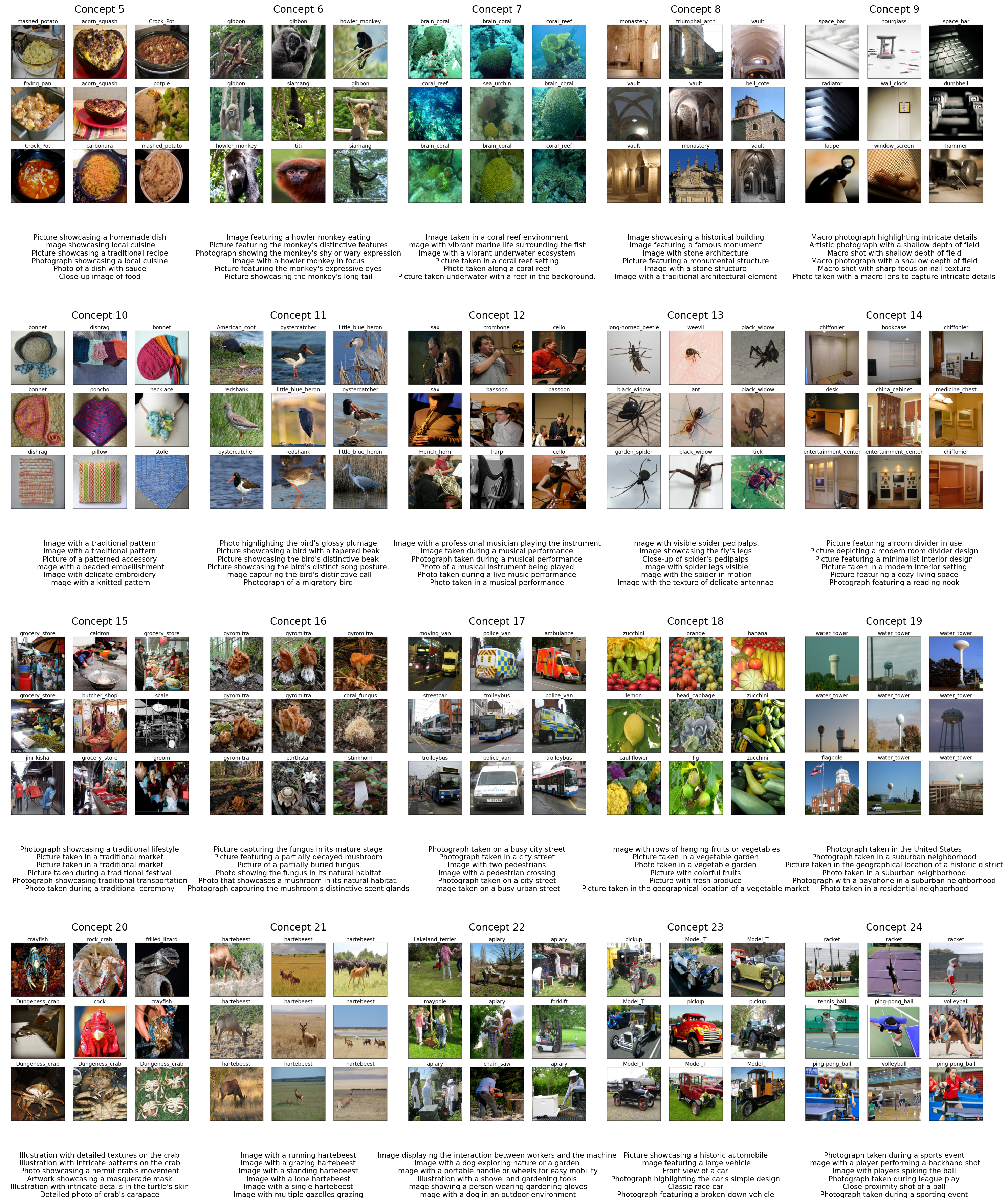}
    \caption{Top-24 concepts using our method with leading images and corresponding text descriptions. We observe that image and text concepts are well-aligned with similar semantic topics.}
    \label{fig:more varimax concepts}
\end{figure}

\subsection{Concept Results for Waterbirds}\label{supp: concept results}\label{supp: waterbirds concept results}
We provide concept results for the Waterbirds dataset in Figure \ref{fig:waterbirds_concept}. 

\begin{figure}[htbp]
    \centering
    \includegraphics[width=\linewidth]{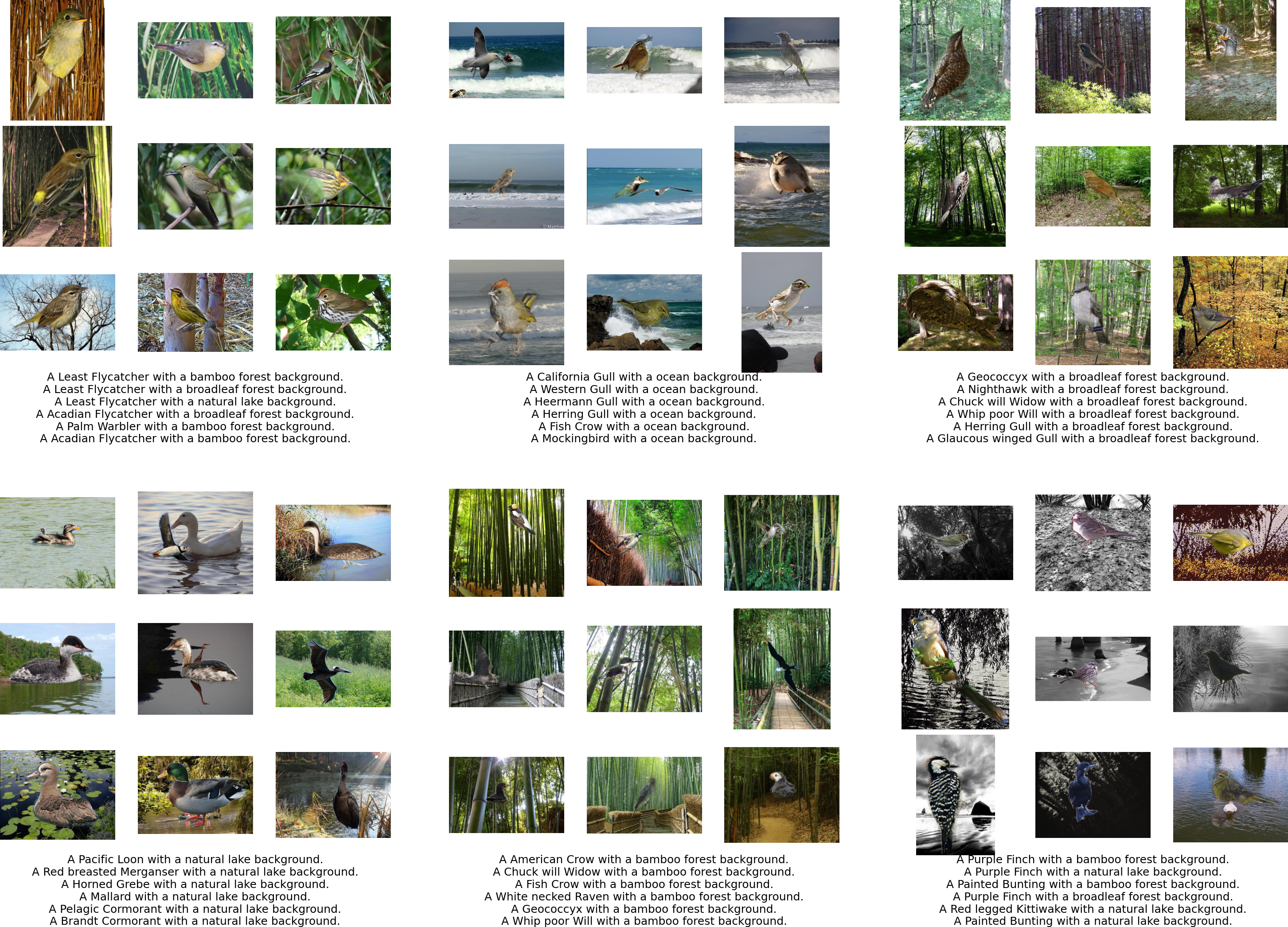}
    \caption{Top-6 Waterbrids concepts with text descriptions. We noticed there are bird-focused concepts (e.g., first row, left column) that specify the species more clearly and mention distinctive features. There are background-focused concepts (e.g., first row, middle column), that highlight the type of environment. We also observed a multiple birds concept (second row, left column).}
    \label{fig:waterbirds_concept}
\end{figure}

\subsection{Additional experiment results}
\noindent \textbf{Concept Learns Analogical Relations}

Word embeddings are known to capture semantic relationships through vector arithmetic, famously demonstrated by analogies such as ``king - man + woman = queen'' \citep{mikolov-etal-2013-linguistic}. We demonstrate that our learned concepts exhibit similar compositional properties with visual concepts.

To evaluate this, we identify three key concepts from our learned representation: $C_{gd}$ (representing groups of dogs), $C_{d}$ (single dog), and $C_b$ (bird). We then construct a new concept through vector arithmetic: $C = C_{gd} - C_{d} + C_b$. Intuitively, this operation should capture the transformation from ``single entity'' to ``group'' and apply it to birds. We evaluate this constructed concept in two ways: by projecting image embeddings ($Score = AC$ where $A$ contains image embeddings) and text embeddings onto this concept space. As shown in Figure~\ref{fig:concept_linearity}, both the top-scoring images and their associated text descriptions align with our expectation, consistently returning groups of birds, demonstrating that our method successfully captures and transfers the concept of collectiveness across different semantic categories.

\begin{figure}[t]
    \centering
    \includegraphics[width=\linewidth]{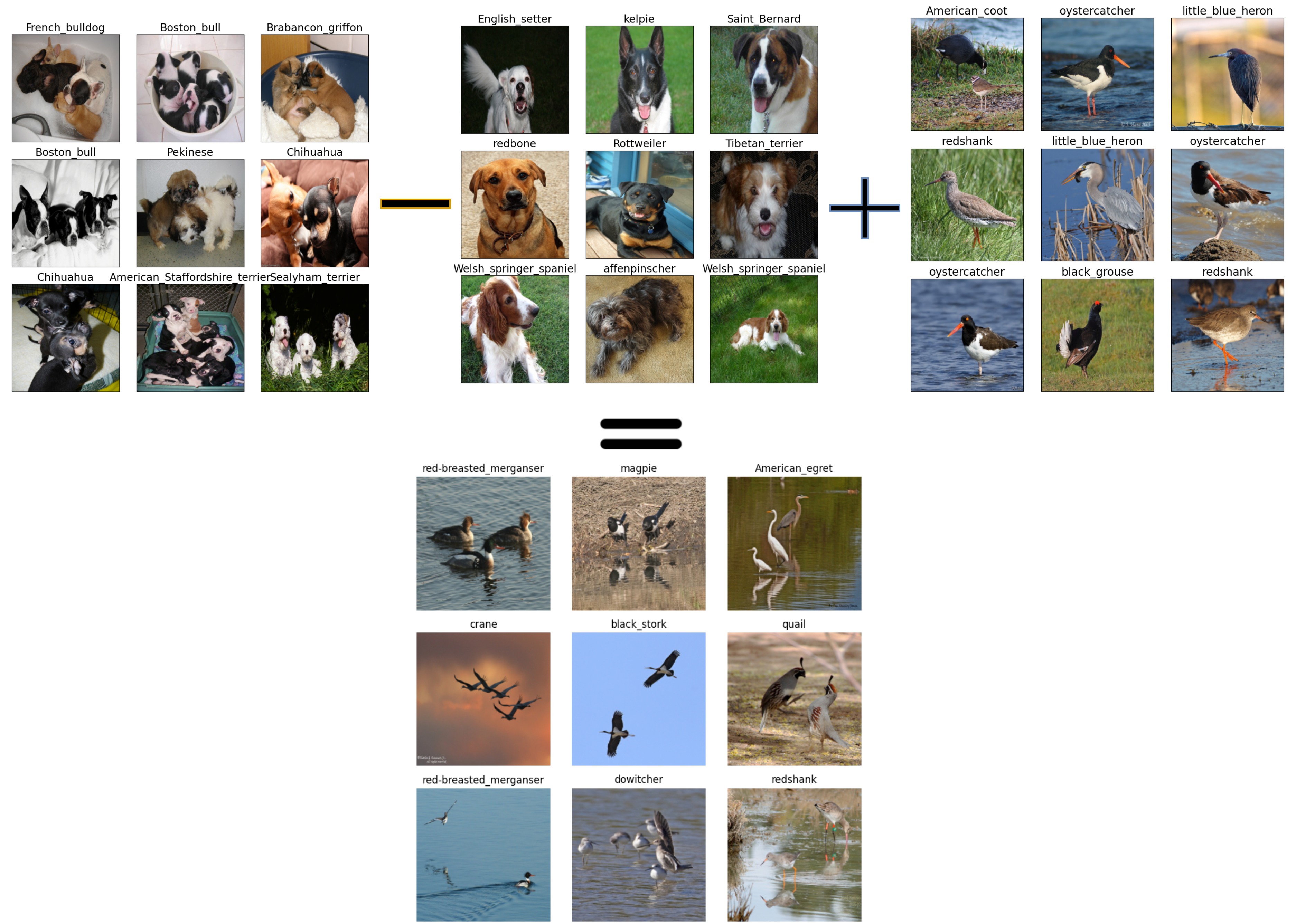}
    \caption{Demonstration of analogical reasoning with concepts. The equation $C_{gd}$ (group of dogs) $-$ $C_d$ (single dog) $+$ $C_b$ (single bird) yields a concept that correctly identifies groups of birds in both image and text spaces.}
    \label{fig:concept_linearity}
\end{figure}

% \subsection{Sparsity-performance Trade-off Experiment}

\section{Technical Lemmas}\label{supp: technical lemmas}
In this section, we provide some technical results for convenience.

The following lemma is a generalization of \citep[Lemma H.5]{li2023spectral} for a non-square matrix. We recall $\sigma_k(Z)=\sqrt{\sigma_k(Z^\top Z)}$ as the $k$-th largest absolute singular value of $Z$.
\begin{lemma}%[Theorem 1 in \citep{Fang1994Ineualities}]
\label{lem:Frobenius norm inequality}
For $\ma\in\R^{d\times k}$, $\mb\in \R^{k\times n}$ where $n>d \ge k$, we have
\begin{align}
    \|\ma\|_\mathrm{F}\cdot\sigma_k(\mb)\le\|\ma\mb\|_\mathrm{F}\le \|\ma\|_\mathrm{F}\cdot\sigma_1(\mb).
\end{align}
\end{lemma}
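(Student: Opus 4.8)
\textbf{Proof plan for Lemma~\ref{lem:Frobenius norm inequality}.}
The plan is to reduce the Frobenius-norm bounds to statements about the singular values of $\mb$ acting on the rows of $\ma$. First I would write $\|\ma\mb\|_\mathrm{F}^2 = \sum_{i=1}^d \|a_i^\top \mb\|_2^2$, where $a_i^\top$ denotes the $i$-th row of $\ma$ (a vector in $\R^k$). Each summand is a quadratic form $a_i^\top \mb\mb^\top a_i$, so the whole quantity equals $\operatorname{tr}(\ma\mb\mb^\top\ma^\top) = \operatorname{tr}(\ma^\top\ma\,\mb\mb^\top)$. The symmetric PSD matrix $\mb\mb^\top \in \R^{k\times k}$ has eigenvalues $\sigma_1(\mb)^2 \ge \cdots \ge \sigma_k(\mb)^2 \ge 0$ (here it is important that $\mb$ has $k$ rows with $k \le n$, so $\mb\mb^\top$ is $k\times k$ and its $k$-th eigenvalue is exactly $\sigma_k(\mb)^2$, matching the convention $\sigma_k(\mb)=\sqrt{\sigma_k(\mb^\top\mb)}$ recalled before the lemma). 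Consequently $\sigma_k(\mb)^2 I_k \preceq \mb\mb^\top \preceq \sigma_1(\mb)^2 I_k$.

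Next I would sandwich the trace: since $\ma^\top\ma$ is PSD, the operator inequality $\sigma_k(\mb)^2 I_k \preceq \mb\mb^\top \preceq \sigma_1(\mb)^2 I_k$ yields
\[
\sigma_k(\mb)^2\operatorname{tr}(\ma^\top\ma) \;\le\; \operatorname{tr}(\ma^\top\ma\,\mb\mb^\top) \;\le\; \sigma_1(\mb)^2\operatorname{tr}(\ma^\top\ma),
\]
using the standard fact that $0 \preceq P \preceq Q$ with $S$ PSD implies $\operatorname{tr}(SP)\le\operatorname{tr}(SQ)$ (decompose $S = \sum_j \lambda_j u_j u_j^\top$ and apply the inequality coordinatewise). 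Since $\operatorname{tr}(\ma^\top\ma) = \|\ma\|_\mathrm{F}^2$ and $\operatorname{tr}(\ma^\top\ma\,\mb\mb^\top)=\|\ma\mb\|_\mathrm{F}^2$, taking square roots of all three terms gives exactly
\[
\|\ma\|_\mathrm{F}\cdot\sigma_k(\mb) \;\le\; \|\ma\mb\|_\mathrm{F} \;\le\; \|\ma\|_\mathrm{F}\cdot\sigma_1(\mb),
\]
which is the claim. Alternatively, the same conclusion follows row-by-row from $\sigma_k(\mb)\|a_i\|_2 \le \|\mb^\top a_i\|_2 \le \sigma_1(\mb)\|a_i\|_2$ and summing the squares, but the trace formulation keeps the bookkeeping cleanest.

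I do not expect a genuine obstacle here; the only point that needs care is the dimension bookkeeping. One must confirm that the relevant "$k$-th singular value'' in the lower bound is the one associated with the $k\times k$ Gram matrix $\mb\mb^\top$ and not an artifact of the larger dimension $n$: because $\mb \in \R^{k\times n}$ with $k \le n$, $\mb$ has at most $k$ nonzero singular values and $\mb\mb^\top$ is genuinely $k\times k$, so its smallest eigenvalue is $\sigma_k(\mb)^2$ under the paper's convention. (If $\mb$ is rank-deficient this smallest eigenvalue is $0$ and the lower bound is the trivial statement $\|\ma\mb\|_\mathrm{F}\ge 0$, which is still correct.) The hypothesis $n > d \ge k$ is exactly what guarantees this shape and that the product $\ma\mb$ is well-formed; no further nondegeneracy is needed. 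This is the non-square generalization of \citep[Lemma~H.5]{li2023spectral}, and the argument above is uniform in whether $\mb$ is square or wide.
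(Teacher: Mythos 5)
Your proof is correct, but it takes a genuinely different route from the paper's. The paper's argument writes $\mb = U_{\mb} D_{\mb} V_{\mb}$, uses orthogonal invariance of the Frobenius norm to reduce to $\|\ma U_{\mb} D_{\mb}\|_{\mathrm{F}}$ with $D_{\mb}$ square and diagonal, and then cites the induction proof of Lemma~H.5 in \citet{li2023spectral} for that square case. You instead give a self-contained trace argument: $\|\ma\mb\|_{\mathrm{F}}^2 = \operatorname{tr}(\ma^\top\ma\,\mb\mb^\top)$, sandwich $\mb\mb^\top$ between $\sigma_k(\mb)^2 I_k$ and $\sigma_1(\mb)^2 I_k$ in the Loewner order, and use monotonicity of $\operatorname{tr}(S\cdot)$ for PSD $S$. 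Your dimension bookkeeping is right: $\mb\mb^\top\in\R^{k\times k}$ has the same nonzero eigenvalues as $\mb^\top\mb$, and since $k\le n$ its smallest eigenvalue is exactly $\sigma_k(\mb)^2$ under the paper's convention $\sigma_k(\mb)=\sqrt{\sigma_k(\mb^\top\mb)}$. What your approach buys is that it is elementary and closed under the present paper — no appeal to an external induction — and it makes transparent that the lemma needs only $k\le n$ plus conformability (the hypothesis $n>d\ge k$ is stronger than the argument requires, a point you nearly make but slightly understate when you say it is ``exactly what guarantees'' the claim). What the paper's approach buys is brevity given that the prior lemma is already on hand, and it keeps the square-matrix case as the single point of contact with \citet{li2023spectral}.
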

\begin{proof}
    Assume the SVD of $\mb$ is $U_{\mb}D_{\mb} V_{\mb}$. Then, 
    \begin{equation}
        \|\ma\mb\|_\mathrm{F}=\|\ma U_{\mb}D_{\mb} V_{\mb}\|_\mathrm{F}=\|\ma U_{\mb}D_{\mb} \|_\mathrm{F}.
    \end{equation}
    By applying a similar induction proof as in \cite[Lemma H.5]{li2023spectral} to $\ma U_{\mb}D_{\mb}$ where $\ma U_{\mb}\in\R^{d\times k}$ and $D_{\mb}\in\R^{k\times k}$, we obtain
    \begin{equation}
        \|\ma U_{\mb}D_{\mb} \|_\mathrm{F}\ge \|\ma U_{\mb}\|_{\rF}\cdot \sigma_k(D_{\mb})=\|\ma \|_{\rF} \cdot \sigma_k(D_{\mb})
    \end{equation}
    and
    \begin{equation}
        \|\ma U_{\mb}D_{\mb} \|_\mathrm{F}\le \|\ma U_{\mb}\|_{\rF}\cdot \sigma_1(D_{\mb})=\|\ma \|_{\rF} \cdot \sigma_1(D_{\mb}).
    \end{equation}
    This concludes the proof.
\end{proof}

\section{Additional theoretical results}\label{supp: additional theoretical results}

\begin{example}\label{ex:gmm_singular vector matrix}
    Let $A\in\R^{n\times m}$ with $A_{ij}$ i.i.d. generated from a Gaussian mixture model $\frac{1}{2} N(1, 1)+\frac{1}{2} N(-1, 1)$. When $\min\{n,k\}>2$, the left and right singular vector matrices of A are rotation-sensitive.
\end{example}

\section{Missing proofs}\label{sec:proofs}

\subsection{Proof of Example \ref{ex:gaussian}}
\begin{proof}
The density of the standard multivariate normal distribution is given by
\[
f(x) = \frac{1}{(2\pi)^{d/2}} \exp\left(-\frac{1}{2} \|x\|^2\right),
\]
which depends only on the $L_2$ norm $\|x\|$. For any rotation matrix $R$, we have $\|x R\| = \|x\|$ because rotations preserve norms. Therefore,
\begin{equation*}
f(x R) = \frac{1}{(2\pi)^{d/2}} \exp\left(-\frac{1}{2} \|x R\|^2\right) = \frac{1}{(2\pi)^{d/2}} \exp\left(-\frac{1}{2} \|x\|^2\right) = f(x).
\end{equation*}
This shows that the density (and thus the distribution) of $x$ is unchanged under rotations, proving rotational invariance.
\end{proof}

\subsection{Proof of Example \ref{ex:gaussian_sing}}

\begin{proof}
It suffices to show the result when $k = \min\{n, m\}$. Due to the nature of normally distributed random variables, for any orthogonal matrices $G \in \R^{m \times m}$ and $H \in \R^{n \times n}$, the entries of $GAH$ are i.i.d. and follow a standard normal distribution. Therefore, this guarantees spherical symmetry for the left and right singular matrices $U$ and $V^\top$ of $A$, implying that both $U$ and $V^\top$ follow a uniform distribution with respect to the Haar measure on the Stiefel manifold.

For any rotation matrix $R \in \R^{k \times k}$, we have $(UR)^{\top}UR = \mathcal{I}_{k}$. Consider $f_R(U) = UR$, which defines a one-to-one map from the Stiefel manifold to itself. By the one-to-one property, $UR$ has the same distribution as $U$, namely following the uniform distribution to the Haar measure on the Stiefel manifold. A similar result holds for $V^\top$.

This guarantees rotation invariance.
\end{proof}

\subsection{Proof of Example \ref{ex:gmm}}

\begin{proof}
To show that this distribution is not rotationally invariant, we need to find a rotation matrix $R$ such that the distribution of $x R$ differs from the distribution of $x$.

Let $R$ be a rotation matrix that rotates the vector $\mu$ to another direction. For simplicity, consider a rotation $R$ that maps $\mu$ to $R \mu = \nu$, where $\nu = (0, 1, 0, \dots, 0)^\top$. 

The original distribution of $x$ has two components centered at $\mu$ and $-\mu$. After rotation, the distribution of $x R$ has components centered at $R \mu = \nu$ and $R (-\mu) = -\nu$.

However, the probability density function (pdf) of $x$ before rotation is
\begin{equation*}
    f(x) = \frac{1}{2} \frac{1}{(2\pi)^{d/2}} \exp\left(-\frac{1}{2} \|x - \mu\|^2\right) 
    + \frac{1}{2} \frac{1}{(2\pi)^{d/2}} \exp\left(-\frac{1}{2} \|x + \mu\|^2\right).
\end{equation*}
After rotation, the PDF becomes
\begin{equation*}
f_R(x) = f(x R) = \frac{1}{2} \frac{1}{(2\pi)^{d/2}} \exp\left(-\frac{1}{2} \|x R - \mu\|^2\right) 
+ \frac{1}{2} \frac{1}{(2\pi)^{d/2}} \exp\left(-\frac{1}{2} \|x R + \mu\|^2\right).
\end{equation*}
But since $\|x R - \mu\|^2 \neq \|x - \mu\|^2$ in general, the pdf $f_R(x)$ is not equal to $f(x)$. Specifically, the locations of the mixture components have changed, resulting in a different distribution.

Moreover, consider evaluating the probability at a specific point. For example, at $x = \mu$, we have

\begin{equation*}
\begin{aligned}
f(\mu)
&= \frac{1}{2(2\pi)^{d/2}}
   \exp\!\Bigl(-\tfrac12\|\mu - \mu\|^2\Bigr)
 + \frac{1}{2(2\pi)^{d/2}}
   \exp\!\Bigl(-\tfrac12\|\mu + \mu\|^2\Bigr)\\
&= \frac{1}{2(2\pi)^{d/2}}
   \Bigl[1 + \exp\!\bigl(-2\|\mu\|^2\bigr)\Bigr]\,.
\end{aligned}
\end{equation*}

After rotation, at $x = \mu$, we have
\begin{equation*}
f_R(\mu) = f(\mu R) = \frac{1}{2} \frac{1}{(2\pi)^{d/2}} \exp\left(-\frac{1}{2} \|\mu R - \mu\|^2\right) + \frac{1}{2} \frac{1}{(2\pi)^{d/2}} \exp\left(-\frac{1}{2} \|\mu R + \mu\|^2\right).
\end{equation*}
Since $\mu R \neq \mu$, the values of $f(\mu)$ and $f_R(\mu)$ are different, confirming that the distribution is not rotationally invariant.
\end{proof}

\begin{comment}
We present the proof that the right singular vector matrix of $A$ is rotation-sensitive, and the proof of the left singular vector matrix follows similarly. It suffices to show the probability that two vectors in $\R^k$ are right singular vectors are different.
    
Consider $v=(1,0,0,\dots,0)$ and $v'=(\frac{1}{\sqrt{k}},\frac{1}{\sqrt{k}},\dots,\frac{1}{\sqrt{k}})$. Then, $(Av)_i\sim \frac{1}{2} N(1, 1)+\frac{1}{2} N(-1, 1)$ for $i=1,\dots,n$, and $(Av')_i$ i.i.d. follows a Binomial distribution of Gaussian mixture model. Therefore, by Chernoff's inequality, with probability at least $1-2\exp(-\frac{nt^2}{4})$, we have
    \begin{equation}\label{eq:1}
        |\|A v\|_{1}-1| > t.
    \end{equation}
    Similarly, for any $i\in [n]$, by Berstein inequality, with a probability of at least $1-2\exp(-\frac{k t^2}{4})$
    \begin{equation}
        |(A v')_i|>t.
    \end{equation}
    Therefore, by Chernoff's inequality again, we obtain with a probability at least $1-2\exp(-\frac{nkt^2}{4})$,
    \begin{equation}\label{eq:2}
        \|Av'\|_1>t.
    \end{equation}
    By combining the above discussion, and plugging $t=\frac{1}{4}$ into \eqref{eq:1} and \eqref{eq:2}, we have with probability at least $1-2\exp(-\frac{nk}{64})-2\exp(-\frac{n}{64})$,
    \begin{equation}
        \|Av'\|_1<0.25<0.75<\|A v\|_{1}.
    \end{equation}
    When $n>64 \ln 10$, this probability is larger than 0.9. This means that the probability of $v$ and $v'$ as the right singular matrices of $A$ are different. This means $A$ is not rotation-sensitive.
\end{comment}

\subsection{Proof of Example \ref{ex:gmm_singular vector matrix}}

\begin{proof}
We present the proof that the right singular vector matrix of $A$ is rotation-sensitive, and the proof of the left singular vector matrix follows similarly. It suffices to show that the probability density function on two right singular vectors in $\R^k$ is different.
    
Consider $v=(1,0,0,\dots,0)$ and $v'=(\frac{1}{\sqrt{k}},\frac{1}{\sqrt{k}},\dots,\frac{1}{\sqrt{k}})$. Then, $(Av)_i\sim \frac{1}{2} N(1, 1)+\frac{1}{2} N(-1, 1)$ for $i=1,\dots,n$, and $(Av')_i$ i.i.d. follows a Binomial distribution of Gaussian mixture model. When $\min\{n,k\}>2$, the probability density function on these two vectors is different because the variance of $(Av)_i$ equals $1$, and the variance of $(Av')_i$ is apparently smaller than $1$. This completes the proof. 
\end{proof}

\begin{comment}
Therefore, by Chernoff's inequality, with probability at least $1-2\exp(-\frac{nt^2}{4})$, we have
    \begin{equation}\label{eq:1}
        |\|A v\|_{1}-1| > t.
    \end{equation}
    Similarly, for any $i\in [n]$, by Berstein inequality, with a probability of at least $1-2\exp(-\frac{k t^2}{4})$
    \begin{equation}
        |(A v')_i|>t.
    \end{equation}
    Therefore, by Chernoff's inequality again, we obtain with a probability at least $1-2\exp(-\frac{nkt^2}{4})$,
    \begin{equation}\label{eq:2}
        \|Av'\|_1>t.
    \end{equation}
    By combining the above discussion, and plugging $t=\frac{1}{4}$ into \eqref{eq:1} and \eqref{eq:2}, we have with probability at least $1-2\exp(-\frac{nk}{64})-2\exp(-\frac{n}{64})$,
    \begin{equation}
        \|Av'\|_1<0.25<0.75<\|A v\|_{1}.
    \end{equation}
    When $n>64 \ln 10$, this probability is larger than 0.9. This means that the probability of $v$ and $v'$ as the right singular matrices of $A$ are different. This means $A$ is rotation-sensitive.

\end{comment}

\subsection{Proof of Proposition \ref{prop:resampling}}
Under $H_0$, to show that $A^{\text{rot}}$ is rotation invariant, we need to prove that for any fixed rotation matrix $R \in \text{SO}(d)$, the distribution of $A^{\text{rot}} R$ is the same as that of $A^{\text{rot}}$.

Since conditional on the same $\|A_i\|_2=\|A_i^{\text{rot}}\|_2$, for any $A^{\text{rot}}_i$ there must exist $R_i$ such that 
\begin{equation*}
    A^{\text{rot}}_i=A_i R_i,
\end{equation*}
where $R_i$ uniformly sampled from $\text{SO}(d)$. For any $R\in \text{SO}(d)$, multiplying $A^{\text{rot}}$ on the right by $R$:
\[
A^{\text{rot}}_i R = (A_i R_i) R = A_i (R_i R) = A_i \tilde{R}_i,
\]
where we define $\tilde{R}_i = R_i R$. Since $R_i$ are uniformly distributed over $\text{SO}(d)$ and independent, and $R$ is a fixed element of $\text{SO}(d)$, the products $\tilde{R}_i = R_i R$ are also uniformly distributed over $\text{SO}(d)$, independent from each other, and independent from $A_i$. Therefore, the distribution of $A_i \tilde{R}_i$ is the same as that of $A_i R_i$:
\[
A_i \tilde{R}_i \stackrel{d}{=} A_i R_i.
\]
This implies that the rows of $A^{\text{rot}} R$ have the same joint distribution as the rows of $A^{\text{rot}}$:
\[
\{ A^{\text{rot}}_i R \}_{i=1}^n \stackrel{d}{=} \{ A^{\text{rot}}_{i} \}_{i=1}^n.
\]
This completes the proof.

\subsection{Proof of Theorem \ref{thm:test_stat_null}}
    For theoretical analysis, we derive an equivalent standardized form:
    \begin{equation}
        TS_3(U) = \frac{\sqrt{nk}}{\sqrt{33}}\left(\frac{1}{k}\sum_{i=1}^k |\text{kurtosis}(U_{.i})| - \frac{3n}{n+2}\right).
    \end{equation}

  Recall that $U^\top U=\mathbf{I}$, therefore we have $\sum_{j=1}^n U_{ji}^2=1$ for any $i$. We can simplify the rescaled kurtosis as
  \begin{equation}
      \mathrm{TS}_3(U_{\cdot i})=  \frac{\sqrt{nk}}{\sqrt{33}}\left(\frac{n}{k}\sum_{i=1}^k\sum_{j=1}^n U_{ji}^4-\frac{3n}{n+2}\right).
  \end{equation}
   We recall from the proof of Example \ref{ex:gaussian_sing} that for fixed $i$, $U_{\cdot i}$ follows a normal distribution on Haar measure. Denote 
   $$X_i=n\sum_{j=1}^n U_{ji}^4-\frac{3n}{n+2}.  $$
   We compute
    \begin{equation*}
        \mathbb{E}[U_{ij}^{2s}]=\frac{\Gamma(\frac{n}{2})\Gamma(s+\frac{1}{2})}{\Gamma(\frac{n}{2}+s) \Gamma(\frac{1}{2})}.
    \end{equation*}
    When $s=2$, we have
    \begin{equation*}
        \mathbb{E}[U_{ij}^4]=\frac{3}{n(n+2)}.
    \end{equation*}
   Therefore, by plugging this formula into our computation, we obtain
   \begin{equation*}
        \mathbb{E}[X_i] = \frac{3n}{n+2}-\frac{3n}{n+2}=0,
   \end{equation*}
   and
   \begin{equation*}
       \mathbb{E}[U_{ij}^8]=\frac{\frac{7}{2}\frac{5}{2}\frac{3}{2}\frac{1}{2}}{(\frac{n}{2}+3)(\frac{n}{2}+2)(\frac{n}{2}+1)\frac{n}{2}}.
   \end{equation*}
   On the other hand, by rewriting $U_{ij}$ as $\frac{S_i}{\sqrt{\sum_{i} S_i^2}}$, we obtain $(U_{ij},U_{i'j})$ and $(\frac{U_{ij}+U_{i'j}}{\sqrt{2}},\frac{U_{ij}-U_{i'j}}{\sqrt{2}})$ are identically distributed. Therefore, we have
   \begin{equation*}
       \mathbb{E}\left[U_{ij}^4 U_{i'j}^4\right]=\mathbb{E}\left[\left(\frac{U_{ij}+U_{i'j}}{\sqrt{2}}\right)^4\left(\frac{U_{ij}-U_{i'j}}{\sqrt{2}}\right)^4\right].
   \end{equation*}
   This can be reduced to
   \begin{equation}\label{eq1}
       \mathbb{E}\left[U_{ij}^8 \right]=4\mathbb{E}\left[U_{ij}^6U_{i'j}^2 \right]+5\mathbb{E}\left[U_{ij}^4U_{i'j}^4 \right].
   \end{equation}
On the other hand, we have
\begin{equation}\label{eq2}
    \mathbb{E}\left[U_{i'j}^6U_{i'j}^2\right]=\frac{1}{n-1}\left(\mathbb{E}\left[U_{i'j}^6\right]-\mathbb{E}\left[U_{i'j}^6\right]\right)=\frac{15}{(n+6)(n+4)(n+2)n}.
\end{equation}
Combining \eqref{eq1} and \eqref{eq2}, we obtain,
\begin{equation*}
        \mathbb{E}\left[U_{ij}^4U_{i'j}^4 \right]=\frac{9}{(n+6)(n+4)(n+2)n}.
\end{equation*}
By plugging in the computations of moments, we obtain
\begin{equation*}
    \begin{split}
        \mathbb{E}\left[X_i^2\right]
        &=n^3\mathbb{E}[U_{ij}^8]+n^3(n-1)\mathbb{E}[U_{ij}^4U_{i'j}^4]-n^3(n-1)(\mathbb{E}[U_{ij}^4])^2\\
        &=\frac{105 \times n^2}{(n+6)(n+4)(n+2)}+\frac{9n^2(n-1)}{(n+6)(n+4)(n+2)}-\frac{9n(n-1)}{(n+2)^2}.
    \end{split}
\end{equation*}
The leading order term of this variance is $\frac{33}{n}$. By the central limit theorem and the fact that $X_i^2$ are i.i.d. random variables, we conclude the result.

\subsection{Proof of Theorem \ref{thm:identifiability}}

Recall Assumption \ref{assum:identifiability} that $\mathbb{E}[\tilde{Z}_{ij}]=0$, $\mathbb{E}(\tilde{Z}_{ij}^2)=\sigma_j^2$, $\mathbb{E}(\tilde{Z}_{ij}^4)=\eta_j\geq 3\sigma_j^4$.
\[
v(R, \tilde{Z} \tilde{R}^\top) = \frac{1}{n} \sum_{\ell=1}^k  \sum_{i=1}^n \left(  [\tilde{Z}\tilde{R} R]_{i\ell} ^4 - \left( \frac{1}{n} \sum_{q=1}^n  [\tilde{Z}\tilde{R} R]_{q\ell}^2 \right)^2 \right).
\]
To simplify notation, we denote $O = \tilde{R}R \in \mathcal{O}(k)$. We want to optimize $v(R, \tilde{Z}^\top \tilde{R}^\top)$ over $O$. We analyze two terms, respectively. For the fourth moment term
\begin{equation*}
    \begin{split}
        \mathbb{E}\left(\frac{1}{n} \sum_{\ell=1}^k  \sum_{i=1}^n   [\tilde{Z}O]_{i\ell} ^4 \right)&= \mathbb{E}\left( \frac{1}{n} \sum_{\ell=1}^k  \sum_{i=1}^n \left(\sum_{j=1}^k \tilde{Z}_{ij}O_{jl}\right)^4 \right)\\
        &= \mathbb{E} \left( \frac{1}{n} \sum_{\ell=1}^k  \sum_{i=1}^n \left(\frac{1}{n} \sum_{\ell=1}^k  \sum_{i=1}^n  \tilde{Z}_{ij}^4 O_{jl}^4 + 3\sum_{h\neq h'}\tilde{Z}_{ih}^2 \tilde{Z}_{ih'}^2 O_{hl}^2 O_{h'l}^2\right) \right)\\
        &= \frac{1}{n} \sum_{\ell=1}^k  \sum_{i=1}^n \left(\sum_{j=1}^k \eta_j O_{jl}^4+ 3\sum_{h\neq h'}\sigma_h^2 \sigma_h'^2 O_{hl}^2 O_{h'l}^2\right)\\
        &= \sum_{\ell=1}^k \left(\sum_{j=1}^k \eta_j O_{jl}^4+ 3\sum_{h\neq h'}\sigma_h^2 \sigma_h'^2 O_{hl}^2 O_{h'l}^2\right),
    \end{split} 
\end{equation*}
because the expectation of all other cross terms in the computation contains at least one moment of an entry, which is 0 according to the independence and $\mathbb{E}[\tilde{Z}_{ij}]=0$. For the second moment term,
% \begin{equation*}
%     \begin{split}
%     \mathbb{E}\left[\frac{1}{n} \sum_{\ell=1}^k  \sum_{i=1}^n \left( \frac{1}{n} \sum_{q=1}^n  [\tilde{Z}O]_{q\ell}^2 \right)^2 \right]&= \frac{1}{n^2}\sum_{\ell=1}^k \left (\sum_{q=1}^n  \left(\sum_{j=1}^k \tilde{Z}_{qj}O_{jl}\right)^2 \right )^2 \\ 
%     &= \mathbb{E}\left[\frac{1}{n^2}\sum_{\ell=1}^k \left (\sum_{q=1}^n  \left(\sum_{j=1}^k \tilde{Z}_{qj}^2 O_{jl}^2 + \sum_{h\neq h'}\tilde{Z}_{qh}\tilde{Z}_{qh'}O_{hl}O_{h'l}\right) \right )^2\right] \\
%     &= \mathbb{E}\left[\frac{1}{n^2}\sum_{\ell=1}^k \left (\sum_{q=1}^n  \sum_{j=1}^k \tilde{Z}_{qj}^2 O_{jl}^2 + \sum_{q=1}^n\sum_{h\neq h'}\tilde{Z}_{qh}\tilde{Z}_{qh'}O_{hl}O_{h'l} \right )^2\right] \\
%     &= \frac{1}{n^2}\cdot \text{cross terms} + \underbrace{\mathbb{E}\left[\frac{1}{n^2}\sum_{\ell=1}^k \left( \sum_{q=1}^n  \sum_{j=1}^k \tilde{Z}_{qj}^2 O_{jl}^2 \right)^2\right]}_{\text{Term 1}}\\
%     &\qquad \qquad+ \underbrace{\mathbb{E}\left[\frac{1}{n^2}\sum_{\ell=1}^k \left( \sum_{q=1}^n \sum_{h \neq h'} \tilde{Z}_{qh} \tilde{Z}_{qh'} O_{hl} O_{h'l} \right)^2\right]}_{\text{Term 2}},
% \end{split} 
% \end{equation*}
\begin{multline*}
  \mathbb{E}\Bigl[\frac{1}{n}\sum_{\ell=1}^k\sum_{i=1}^n
    \bigl(\tfrac{1}{n}\sum_{q=1}^n[\tilde ZO]_{q\ell}^2\bigr)^2\Bigr]
  = \frac{1}{n^2}\sum_{\ell=1}^k\Bigl(\sum_{q=1}^n\bigl(\sum_{j=1}^k\tilde Z_{qj}O_{jl}\bigr)^2\Bigr)^2
  \\[0.5ex]
  = \mathbb{E}\Bigl[\frac{1}{n^2}\sum_{\ell=1}^k
       \Bigl(\sum_{q=1}^n\sum_{j=1}^k\tilde Z_{qj}^2O_{jl}^2
             + \sum_{q=1}^n\sum_{h\neq h'}\tilde Z_{qh}\tilde Z_{qh'}O_{hl}O_{h'l}\Bigr)^2
    \Bigr]
  \\[0.5ex]
  = \frac{1}{n^2}\,\text{cross terms}
    + \underbrace{\mathbb{E}\Bigl[\frac{1}{n^2}\sum_{\ell=1}^k
        \bigl(\sum_{q=1}^n\sum_{j=1}^k\tilde Z_{qj}^2O_{jl}^2\bigr)^2
      \Bigr]}_{\text{Term 1}}
  \\[0.75ex]
  \qquad\quad
  + \underbrace{\mathbb{E}\Bigl[\frac{1}{n^2}\sum_{\ell=1}^k
        \bigl(\sum_{q=1}^n\sum_{h\neq h'}\tilde Z_{qh}\tilde Z_{qh'}O_{hl}O_{h'l}\bigr)^2
      \Bigr]}_{\text{Term 2}}.
\end{multline*}
Since $\mathbb{E}(\tilde{Z}_{qj})=0$, the expectation of all the cross terms should be 0 and can be removed from the equation. Now we compute terms 1 and 2, respectively. For term 1,
\begin{align*}
    \mathbb{E}\left(\text{Term 1}\right)&= \frac{1}{n^2}\sum_{\ell=1}^k\mathbb{E}\left(  \sum_{q=1}^n  \sum_{j=1}^k \tilde{Z}_{qj}^2 O_{jl}^2 \right)^2 \\
    &= \frac{1}{n^2}\sum_{\ell=1}^k \left( \mathrm{Var}\left(\sum_{q=1}^n  \sum_{j=1}^k \tilde{Z}_{qj}^2 O_{jl}^2\right)+\left(\mathbb{E}\left[\sum_{q=1}^n  \sum_{j=1}^k \tilde{Z}_{qj}^2 O_{jl}^2\right]\right)^2\right)\\
    &= \frac{1}{n^2}\sum_{\ell=1}^k \left( \sum_{q=1}^n  \sum_{j=1}^k O_{jl}^4 
    \mathrm{Var}\left( \tilde{Z}_{qj}^2 \right)+\left(\sum_{q=1}^n  \sum_{j=1}^k O_{jl}^2 \mathbb{E}(\tilde{Z}^2_{qj}) \right)^2\right)\\
    &= \sum_{\ell=1}^k \left( \frac{1}{n}   \sum_{j=1}^k O_{jl}^4 ( \eta_j-\sigma_j^4)+\left( \sum_{j=1}^k O_{jl}^2 \sigma_j^2 \right)^2\right)\\
    &= \sum_{\ell=1}^k \left( \sum_{j=1}^k O_{jl}^2 \sigma_j^2 \right)^2 + \frac{1}{n}\sum_{\ell=1}^k \sum_{j=1}^k ( \eta_j-\sigma_j^4)O_{jl}^4.
\end{align*}
For term 2,
\begin{align*}
    \mathbb{E}\left(\text{Term 2}\right)&= \frac{1}{n^2}\sum_{\ell=1}^k \mathbb{E} \left( \sum_{q=1}^n \sum_{h \neq h'} \tilde{Z}_{qh} \tilde{Z}_{qh'} O_{hl} O_{h'l} \right)^2 \\
    &= \frac{2}{n^2}\sum_{\ell=1}^k  \sum_{q=1}^n \sum_{h \neq h'}  \left( \mathbb{E}(\tilde{Z}^2_{qh}) \mathbb{E}(\tilde{Z}^2_{qh'}) O^2_{hl} O^2_{h'l} \right)\\
    &= \frac{2}{n}\sum_{\ell=1}^k \sum_{h \neq h'}  \left(\sigma_h^2 \sigma_{h'}^2 O^2_{hl} O^2_{h'l} \right)\\
    &= \frac{2}{n} \sum_{\ell=1}^k \left(\left(\sum_{h=1}^k \sigma_h^2 O_{hl}^2\right)^2 - \sum_{h=1}^k \sigma_h^4 O_{hl}^4 \right).
\end{align*}

Combining the computation for the second and fourth moments, we obtain
% \begin{multline*}
%     v(R, \tilde{Z} \tilde{R}^\top) &= \sum_{\ell=1}^k \left(\sum_{j=1}^k \eta_j O_{jl}^4+ 3\sum_{h\neq h'}\sigma_h^2 \sigma_h'^2 O_{hl}^2 O_{h'l}^2\right)\\
%     &\qquad \qquad -\sum_{\ell=1}^k \left( \sum_{j=1}^k O_{jl}^2 \sigma_j^2 \right)^2 - \frac{1}{n}\sum_{\ell=1}^k \sum_{j=1}^k ( \eta_j-\sigma_j^4)O_{jl}^4-\frac{2}{n} \sum_{\ell=1}^k \left(\left(\sum_{h=1}^k \sigma_h^2 O_{hl}^2\right)^2 - \sum_{h=1}^k \sigma_h^4 O_{hl}^4 \right)\\
%     &= \sum_{\ell=1}^k \sum_{j=1}^k \left(\frac{n-1}{n}\eta_j - \frac{3n-3}{n}\sigma_j^4\right) O_{jl}^4 + \left(2-\frac{2}{n}\right)\sum_{\ell=1}^k\left(\sum_j \sigma_j^2 O_{jl}^2\right)^2 \\
%     &\leq \sum_{\ell=1}^k \sum_{j=1}^k \left(\frac{n-1}{n}\eta_j - \frac{3n-3}{n}\sigma_j^4\right) + \left(2-\frac{2}{n}\right)\sum_{\ell=1}^k \sum_j \sigma_j^4 \sum_j O_{jl}^4 \\ 
%     &\leq \sum_{\ell=1}^k \sum_{j=1}^k \left(\frac{n-1}{n}\eta_j - \frac{3n-3}{n}\sigma_j^4\right) + \left(2-\frac{2}{n}\right)\sum_{\ell=1}^k \sum_j \sigma_j^4.
% \end{multline*}

\begin{equation*}
\begin{split}
v\bigl(R, \tilde{Z}\,\tilde{R}^\top\bigr)
&= \sum_{\ell=1}^k \Bigl(\sum_{j=1}^k \eta_j\,O_{jl}^4
    + 3\sum_{h\neq h'}\sigma_h^2\,\sigma_{h'}^2\,O_{hl}^2\,O_{h'l}^2\Bigr) \\
&\quad - \sum_{\ell=1}^k \Bigl(\sum_{j=1}^k O_{jl}^2\,\sigma_j^2\Bigr)^2
    - \frac{1}{n}\sum_{\ell=1}^k \sum_{j=1}^k (\eta_j - \sigma_j^4)\,O_{jl}^4 \\
&\quad - \frac{2}{n} \sum_{\ell=1}^k \Bigl(\bigl(\sum_{h=1}^k \sigma_h^2\,O_{hl}^2\bigr)^2
    - \sum_{h=1}^k \sigma_h^4\,O_{hl}^4\Bigr) \\[6pt]
&= \sum_{\ell=1}^k \sum_{j=1}^k \Bigl(\tfrac{n-1}{n}\,\eta_j
    - \tfrac{3n-3}{n}\,\sigma_j^4\Bigr)\,O_{jl}^4
    + \Bigl(2 - \tfrac{2}{n}\Bigr)\sum_{\ell=1}^k \Bigl(\sum_j \sigma_j^2\,O_{jl}^2\Bigr)^2 \\[4pt]
&\le \sum_{\ell=1}^k \sum_{j=1}^k \Bigl(\tfrac{n-1}{n}\,\eta_j
    - \tfrac{3n-3}{n}\,\sigma_j^4\Bigr)
    + \Bigl(2 - \tfrac{2}{n}\Bigr)\sum_{\ell=1}^k \sum_j \sigma_j^4 \sum_j O_{jl}^4 \\[4pt]
&\le \sum_{\ell=1}^k \sum_{j=1}^k \Bigl(\tfrac{n-1}{n}\,\eta_j
    - \tfrac{3n-3}{n}\,\sigma_j^4\Bigr)
    + \Bigl(2 - \tfrac{2}{n}\Bigr)\sum_{\ell=1}^k \sum_j \sigma_j^4.
\end{split}
\end{equation*}
Equality can and can only be achieved when $O$ is a permutation matrix, where each row and each column have and only have exactly one 1. This completes the proof.

\subsection{Proof of Theorem \ref{thm:splice fails}}

Denote the generalized inverse of $\mC_\splice^\top \mC_\splice$ as $(\mC_\splice^\top \mC_\splice)^{\dagger}$. Then, by projecting from the column space of $\mC_\splice$ to $\mC^*$, we can rewrite the condition $\min_{P\in\R^{m\times k}}\|\mC_{\splice}P-\mC^*\|_{\rF}\ge \delta$ as 
    \begin{equation}\label{eq:projection condition reduction}
        \|\mC_\splice (\mC_\splice^\top \mC_\splice)^{\dagger}\mC_\splice^\top \mC^*-\mC^*\|_{\rF}\ge \delta.
    \end{equation}
    Similarly, by plugging $A=Z^*\mC^{*\top}$, 
    \begin{equation*}
         \begin{split}
             \min_{Z\in\R^{n\times k}}\|A-Z\mC_\splice^\top\|_{\rF}
             &= \min_{Z\in\R^{n\times k}}\|Z^*\mC^{*\top}-Z\mC_\splice^\top\|_{\rF}\\
             &=\|Z^*\mC^{*\top}-Z^*\mC^{*\top}\mC_\splice(\mC_\splice^\top\mC)^\dagger \mC_\splice^\top\|_{\rF}\\
             &\ge \|\mC_\splice (\mC_\splice^\top \mC_\splice)^{\dagger}\mC_\splice^\top \mC^*-\mC^*\|_{\rF}\cdot \sigma_k(Z^*),
         \end{split}
    \end{equation*}
    where the inequality follows from Lemma \ref{lem:Frobenius norm inequality}. We conclude the result by plugging \eqref{eq:projection condition reduction} in the above equation.

\end{document}